\documentclass[10pt,twocolumn,letterpaper]{article}

\usepackage{cvpr}
\usepackage{times}
\usepackage{epsfig}
\usepackage{graphicx}
\usepackage{amsmath}
\usepackage{amssymb}

\usepackage{cases}
\usepackage[linesnumbered,ruled,vlined,lined,boxed,commentsnumbered]{algorithm2e}
\usepackage{subfigure}
\usepackage{indentfirst}
\usepackage{multirow}
\usepackage{boldline}
\usepackage{dsfont}

\usepackage{enumitem}
\setitemize{noitemsep,topsep=0pt,parsep=0pt,partopsep=0pt}
\usepackage{mathtools}
\usepackage{nccmath}

\usepackage[table]{xcolor}
\usepackage{makecell}

\usepackage{enumitem}
\setlist{nolistsep}

\usepackage{amsthm}
\theoremstyle{definition}
\newtheorem{definition}{Definition}

\newtheorem{proposition}{Proposition}

\usepackage{url}

\usepackage[pagebackref=true,breaklinks=true,letterpaper=true,colorlinks,bookmarks=false]{hyperref}

\def\bitter{bitter} 
\def\sweet{sweet} 
\def\taster{taster}
\def\tasters{tasters}

\def\padv{$\hat{\rho_{adv}}$}

\cvprfinalcopy 


\ifcvprfinal\pagestyle{empty}\fi
\begin{document}

\title{Measuring Dataset Granularity}

\author{
  \hspace{-1.5mm}Yin Cui$^{1,2}$\thanks{Equal contribution. Yin Cui is now at Google Research.},\hspace{5pt}
  Zeqi Gu$^{1}$\footnotemark[1],\hspace{5pt}
  Dhruv Mahajan$^{3}$,\hspace{2pt}
  Laurens van der Maaten$^{3}$,\hspace{2pt}
  Serge Belongie$^{1,2}$,\hspace{2pt}
  Ser-Nam Lim$^{3}$\\
$^{1}$Cornell University\hspace{30pt}$^{2}$Cornell Tech\hspace{30pt}$^{3}$Facebook AI
}

\maketitle

\begin{abstract}
Despite the increasing visibility of fine-grained recognition in our field, ``fine-grained'' has thus far lacked a precise definition. 
In this work, building upon clustering theory, we pursue a framework for measuring dataset granularity. 
We argue that dataset granularity should depend not only on the data samples and their labels, but also on the distance function we choose.
We propose an axiomatic framework to capture desired properties for a dataset granularity measure and provide examples of measures that satisfy these properties.
We assess each measure via experiments on datasets with hierarchical labels of varying granularity.
When measuring granularity in commonly used datasets with our measure, we find that certain datasets that are widely considered fine-grained in fact contain subsets of considerable size that are substantially more coarse-grained than datasets generally regarded as coarse-grained.
We also investigate the interplay between dataset granularity with a variety of factors and find that fine-grained datasets are more difficult to learn from, more difficult to transfer to, more difficult to perform few-shot learning with, and more vulnerable to adversarial attacks.

\end{abstract}

\section{Introduction}

\begin{figure*}[t]
\begin{center}
\includegraphics[width=1.0\linewidth]{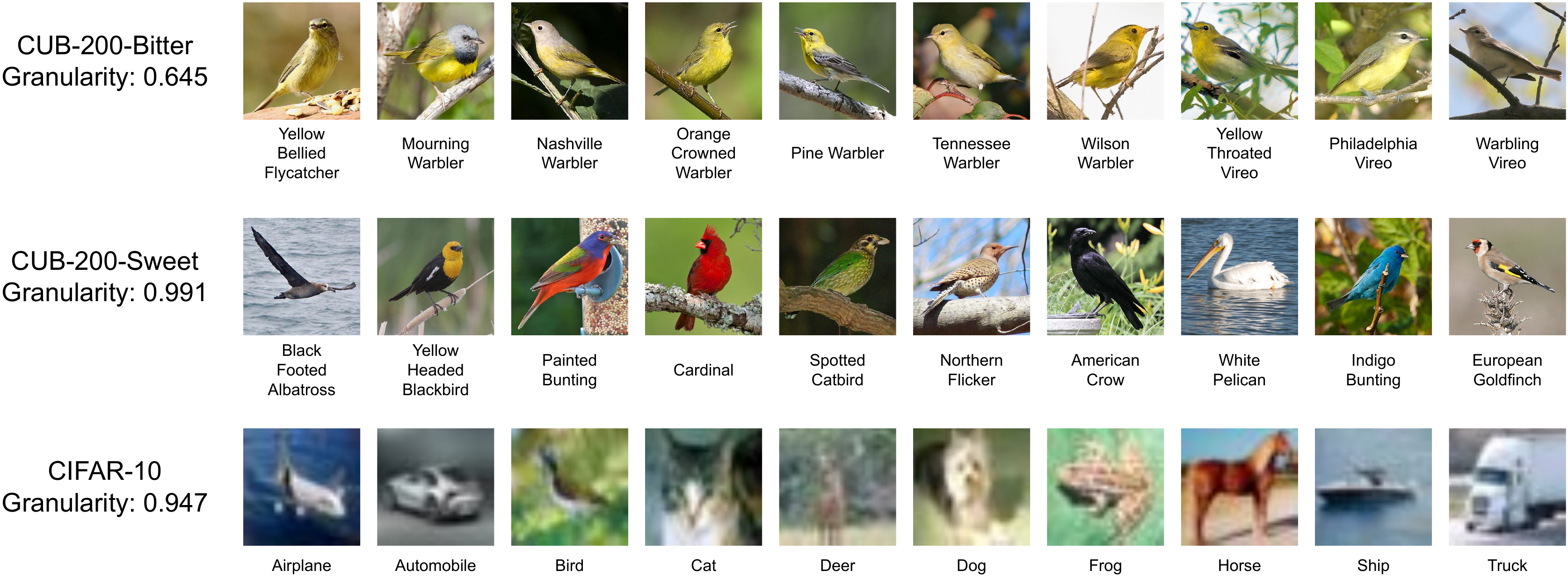}
\end{center}
\caption{Using our measure of dataset granularity, which ranges from $0$ for the finest granularity to $1$ for the coarsest, we show two subsets from CUB-200 with 10 classes that are fine-grained (CUB-200-Bitter) and coarse-grained (CUB-200-Sweet). CIFAR-10, which is widely considered coarse-grained, is more fine-grained than CUB-200-Sweet. We represent a class by a randomly selected image from that class.}
\label{fig:overview}
\vspace{-2mm}
\end{figure*}

Recent advances in deep learning~\cite{lecun2015deep} have fueled remarkable progress in visual recognition~\cite{alexnet, resnet}.
However, fine-grained recognition on real-world data still remains challenging~\cite{inaturalist}.
Fine-grained recognition is often regarded as a sub-field of object recognition that aims to distinguish subordinate categories within entry-level categories.
Examples include recognizing natural categories such as bird species~\cite{cub200} and dog breeds~\cite{stanford_dog}, or man-made categories such as fashion attributes~\cite{liu2016deepfashion} and car make \& model~\cite{stanford_car}.
There are also recent efforts focusing on fine-grained recognition in real-world data that are highly imbalanced~\cite{mahajan2018exploring,cui2019class}.
Despite the increasing visibility of fine-grained recognition as a sub-field, the designation has thus far lacked a precise definition.

A clear definition and quantitative evaluation of dataset granularity can not only give us a better understanding of current algorithms' performances on various tasks, but also provide a step toward a compact characterization of large-scale datasets that serve as training data for modern deep neural networks. 
Such a characterization can also provide a foundation for inter-dataset comparisons, independent of the domain semantics.
Given the nature of the problem, instead of regarding whether a dataset is fine-grained or not as a True-or-False binary question, we pursue a framework to measure dataset granularity in a continuous scale.

Dataset granularity depends on two factors: the ground truth labeling and the distance function. 
The notion of dataset granularity only applies when the dataset is labeled. For unlabeled data, in extreme cases, we could interpret samples as belonging to a single super class covering the entire dataset, or as each belonging to an instance-level class, a fact that makes granularity not well-defined.
As for the distance function, it includes a feature representation method and the metric to calculate distance between features. 
Examples of distance functions include Euclidean distance between features extracted from deep networks, Hamming distance between binary encoding of attributes, \etc.
The granularity of a dataset will be different if we change how we represent samples and calculate their distances.
This is aligned with the notion of fine-grained in human perception.
Visually similar bird species are difficult to distinguish for ordinary people but easy for bird experts~\cite{nabirds}.
People with congenital amusia have difficulty processing fine-grained pitch in music and speech~\cite{tillmann2011fine}.
In this respect, using different distance functions on a dataset is analogous to different levels of domain expertise.

We define a dataset granularity measure as a function that maps a labeled dataset and a distance function to a value indicating the dataset granularity, wherein low granularity means fine-grained and high granularity means coarse-grained.
Considering there might be multiple reasonable measures, we seek a set of broad, intuitive notions for which we have a consensus on what a dataset granularity measure should capture.
Inspired by the seminal work of Kleinberg~\cite{kleinberg2003impossibility} and Ben-David and Ackerman~\cite{ben2009measures} on clustering theory, we propose an axiomatic framework comprising three desired properties for a dataset granularity measure: granularity consistency, isomorphism invariance, and scale invariance.
We show these properties are self-consistent by providing a few examples of dataset granularity measures.

To choose the optimal measure among our proposals, we assess their quality on datasets with predefined hierarchical labels.
Intuitively, when we merge subordinate classes into a higher-level super-class, the dataset is less fine-grained thus the granularity should be strictly larger. We use this cue as an oracle for empirical evaluation of candidate measures. Meanwhile, we perform robustness studies by using various deep networks pre-trained on ImageNet as distance functions.
Among a wide-range of datasets, we find that the relative ranking of dataset granularity is insensitive to the type of granularity measure and the change of model architecture in distance function. Therefore, unless otherwise mentioned, we use the best measure candidate with a fixed network in distance function (\ie, ResNet-50 pre-trained on ImageNet) for all the experiments.

The first finding of our experiments is that, surprisingly, current datasets that are widely considered fine-grained in fact always contain a subset of a large number of classes with granularity higher than some other datasets generally considered coarse-grained.
Figure~\ref{fig:overview} shows that based on our proposed dataset granularity measure, a subset from CUB-200~\cite{cub200} has higher granularity than CIFAR-10~\cite{cifar}. For each fine-grained dataset, we are able to extract two subsets with drastically different granularity, named \textit{\bitter{} \taster{}} for the fine-grained subset and \textit{\sweet{} \taster{}} for the coarse-grained one. 
We then use two \tasters{} to study the interplay between dataset granularity and various factors. Specifically, we run existing benchmarks of (1) image classification, (2) transfer learning, (3) few-shot learning and (4) adversarial attack. 
Our experiments show that as dataset granularity decreases to become more fine-grained, the difficulty of classification, transfer learning, and few-shot learning increases, and the difficulty of adversarial attack decreases.
Our key contributions are as follows.

First, \textbf{a theoretical framework} that defines axioms a dataset granularity measure should satisfy. 
To the best of our knowledge, this is the first attempt to quantitatively define dataset granularity. 
We provide examples of granularity measures and empirically select the best candidate.

Second, we show how to extract \textbf{fine-grained \bitter{} and coarse-grained \sweet{} \tasters{}} from a given dataset and illustrate the significant granularity difference between them. 

Third, we perform \textbf{a comprehensive empirical study} on the characteristics of dataset granularity and their impacts on various tasks including image recognition, transfer learning, few-shot learning, and adversarial attack. 
Observing the large performance gap between the two \tasters{}, we believe the \bitter{} \taster{} can provide a more strict fine-grained benchmark without extra labeling effort, which is especially useful for existing datasets with saturated performance.


\section{Related Work}

\textbf{Learning from Fine-Grained Visual Data.}
Recent work on learning from fine-grained visual data focus on designing a deep network in an end-to-end fashion that incorporates modules for better capturing subtle visual differences.
Examples of such modules include non-linear feature interactions~\cite{bilinearcnn,cbp,kernel_pooling} and attention mechanisms~\cite{jaderberg2015spatial,two-level_attention,fu2017look,multi-attention_fgvc,sun2018multi,luo2019cross,zhang2019learning}.
Moreover, efforts have been made to understand the characteristics of learning from fine-grained visual data.
Some observations include: (1) deep networks for fine-grained recognition benefit from higher input image resolution~\cite{jaderberg2015spatial,bilinearcnn,cui2018large}; 
(2) transfer learning to domain-specific fine-grained datasets benefits from networks pre-trained on an adaptively selected subset from the source domain~\cite{ge2017borrowing,cui2018large,ngiam2018domain};
(3) ImageNet pre-training appears to have only marginal benefits for fine-grained datasets when their labels are not well-represented in ImageNet~\cite{kornblith2018better};
(4) Low-shot learning is more difficult on fine-grained dataset, where the dataset granularity is empirically measured by the mean Euclidean distance between class centroids~\cite{kozerawski2018clear}.
Our work differs from them by focusing on how to measure dataset granularity and then performing extensive quantitative studies using the proposed measure.

\textbf{Clustering Theory and Evaluation.}
The notion of clustering arises naturally whenever one aims to group similar objects together.
Despite this intuitively compelling goal, a unified theoretical framework for clustering is difficult to find~\cite{kleinberg2003impossibility}.
However, it is relatively easy to construct a theoretical framework for clustering quality measures~\cite{ben2009measures}.
Commonly used metrics for clustering quality evaluation can be divided into two groups: supervised and unsupervised.
Supervised clustering measures including Normalized Mutual Information (NMI), the Rand index~\cite{hubert1985comparing}, the V-measure~\cite{rosenberg2007v} and the Fowlkes-Mallows index (FMI)~\cite{fowlkes1983method}
compare the clustering assignment against ground-truth labels.
Unsupervised clustering measures~\cite{silhouettes,calinski1974dendrite,davies1979cluster,baker1975measuring,hubert1976general} evaluate the intrinsic quality of clustering results.
In our work, build upon seminal work in \cite{kleinberg2003impossibility,ben2009measures}, we regard a labeled dataset as a clustering assignment and develop a theoretical framework that defines desired properties for a dataset granularity measure.
Note that some commonly used unsupervised clustering measures, including the Silhouette index~\cite{silhouettes}, the Calinski-Harabaz index~\cite{calinski1974dendrite} and the Davies-Bouldin index~\cite{davies1979cluster}, do not satisfy our desired properties.
Please refer to Desgraupes~\cite{clusteringindices} for a comprehensive review of clustering evaluation metrics.

\section{Measuring Dataset Granularity}

We introduce an axiomatic framework for measuring dataset granularity by starting with the problem formulation and mathematical notations.
Then we define three axiomatic properties that a dataset granularity measure should satisfy and show these properties are self-consistent by providing a few examples of dataset granularity measure.
Finally, we assess granularity measures and study their robustness with respect to the distance function.

\subsection{Problem Formulation and Notations}

We denote a labeled dataset as $\mathcal{S} = (\mathcal{X}, \mathcal{Y})$, where 
$\mathcal{X} = \{x_i\}_{i=1}^n$ is a set of samples and $\mathcal{Y} = \{y_i \in \{1, 2, \dots, k\}\}_{i=1}^n$ is a set of corresponding labels~\footnote{We assume there is exactly one ground-truth label for a sample.}. $n$ denotes the number of samples and $k$ is the number of classes.
The set of labels $\mathcal{Y}$ divides samples $\mathcal{X}$ into $k$ classes $C = \{C_1, C_2, \dots, C_k\}$, where $C_i = \{x_j\ |\ y_j = i,\ x_j \in \mathcal{X},\ y_j \in \mathcal{Y}\}$, $C_i \cap C_j = \emptyset$ for $i \neq j$ and $\cup_{i=1}^k C_i = \mathcal{X}$.

A distance function $d: \mathcal{X} \times \mathcal{X} \mapsto \mathbb{R}$ defines the similarity between two samples, such that $\forall\ x_i, x_j \in \mathcal{X}$, $d(x_i, x_j) \geq 0$, $d(x_i, x_j) = d(x_j, x_i)$ and $d(x_i, x_j) = 0$ if and only if $x_i = x_j$. Note that we do not require distance functions to be metrics satisfying the triangle inequality: $d(x_i, x_j) + d(x_j, x_k) \geq d(x_i, x_k), \forall\ x_i, x_j, x_k \in \mathcal{X}$.
In our experiments, we use distance functions as the Euclidean distance between features extracted from deep networks.

We define a \textit{dataset granularity measure} as a function $g: \mathcal{S} \times d \mapsto \mathbb{R}$ that maps a labeled dataset and a distance function to a real number indicating the dataset granularity.
Smaller number of $g(\mathcal{S}, d)$ means finer granularity.

\subsection{Properties of Dataset Granularity Measures}

We propose a set of three desired properties as axioms that a dataset granularity measure ought to satisfy.
The idea is to characterize what should be a dataset granularity measure and prevent trivial measures.

Intuitively, a labeled dataset is fine-grained if the distance between intra-class samples is relatively large compared with the distance between inter-class samples.
If we reduce the intra-class distances and enlarge the inter-class distances, the dataset should be less fine-grained and have a larger granularity.
This could be done by using a distance function that better represents the data and makes samples well clustered.
We want to have such a property that captures the concept of making a dataset less fine-grained.

Formally, we define $d'$ as a \textit{granularity consistent transformation} of $d$ if $\forall\ x_i, x_j \in \mathcal{X}$ in the same class of $C$, we have $d'(x_i, x_j) \leq d(x_i, x_j)$; and $\forall\ x_i, x_j \in \mathcal{X}$ in different classes of $C$, we have $d'(x_i, x_j) \geq d(x_i, x_j)$.

\begin{definition}[Granularity Consistency]
\label{def:consistency}
Suppose distance function $d'$ is a granularity consistent transformation of distance function $d$. For any $d'$, $g(\mathcal{S}, d') \geq g(\mathcal{S}, d)$.
\end{definition}

The definition of granularity consistency is inspired by the consistency property proposed in Kleinberg~\cite{kleinberg2003impossibility} and Ben-David and Ackerman~\cite{ben2009measures}.

Another nice property we would like to have is that a dataset granularity measure should remain the same when we permute the class indices.
Formally speaking, we define $\mathcal{S}' = (\mathcal{X}', \mathcal{Y}')$ as an \textit{isomorphic transformation} of $\mathcal{S} = (\mathcal{X}, \mathcal{Y})$ if $\mathcal{X}' = \mathcal{X}$ and $\mathcal{Y}' = \{\sigma(y_i)\}_{i=1}^n$, where $\mathcal{Y} = \{y_i \in \{1, 2, \dots, k\}\}_{i=1}^n$ and $\sigma(.)$ is a permutation of $\{1, 2, \dots, k\}$.
We require the measure to be invariant to any isomorphic transformations.

\begin{definition}[Isomorphism Invariance]
\label{def:isomorphism}
Suppose dataset $\mathcal{S}'$ is a isomorphic transformation of dataset $\mathcal{S}$. For any $\mathcal{S}'$, $g(\mathcal{S}', d) = g(\mathcal{S}, d)$.
\end{definition}

Figure~\ref{fig:transformation} illustrates the isomorphic transformation, under which the granularity should remain unchanged, and the granularity consistent transformation, under which the granularity should be larger.

Finally, a measure should be scale invariant with respect to the distance function.

\theoremstyle{definition}
\begin{definition}[Scale Invariance]
\label{def:scale}
For any distance function $d$ and any $\alpha > 0$, we have $g(\mathcal{S}, d) = g(\mathcal{S}, \alpha d)$.
\end{definition}

The scale invariance property requires the dataset granularity measure to be invariant to changes in the units of distance function.
This property can be easily satisfied by normalizing the feature vector before calculating distance (\eg, $\ell_2$-normalization for Euclidean distance).





\begin{figure}[t]
\begin{center}
\subfigure[Granularity Consistent Transformation]{
\includegraphics[width=0.8\columnwidth]{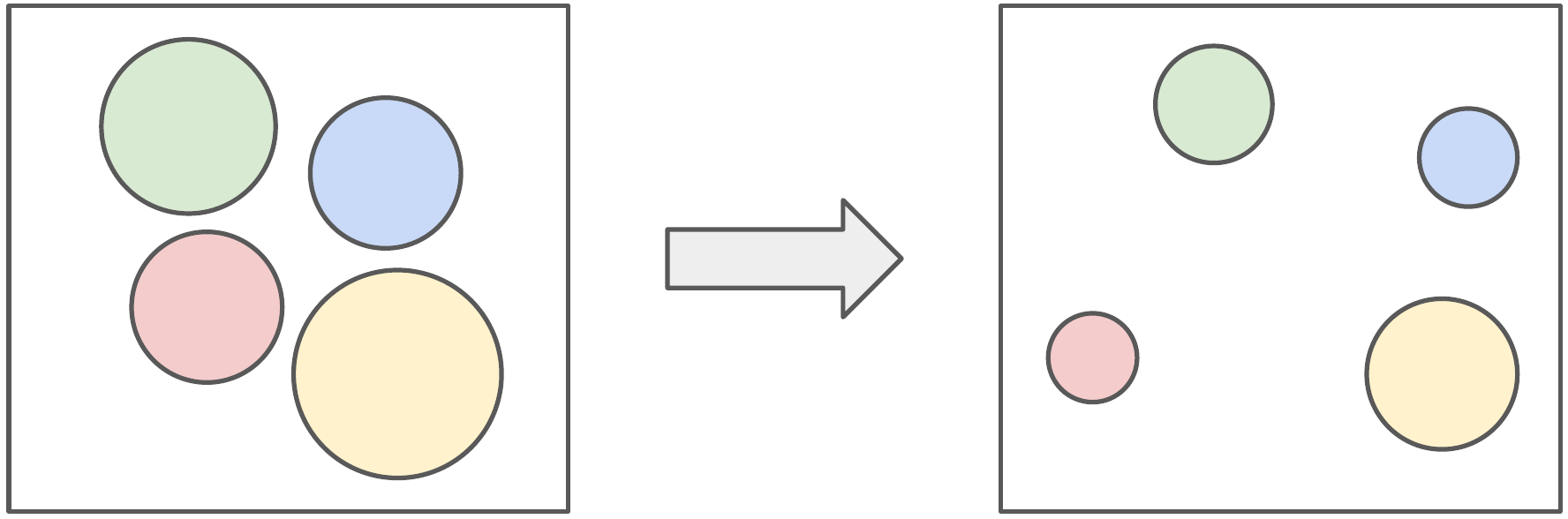}
   \label{fig:transformation_gamma}}
\subfigure[Isomorphic Transformation]{
\includegraphics[width=0.8\columnwidth]{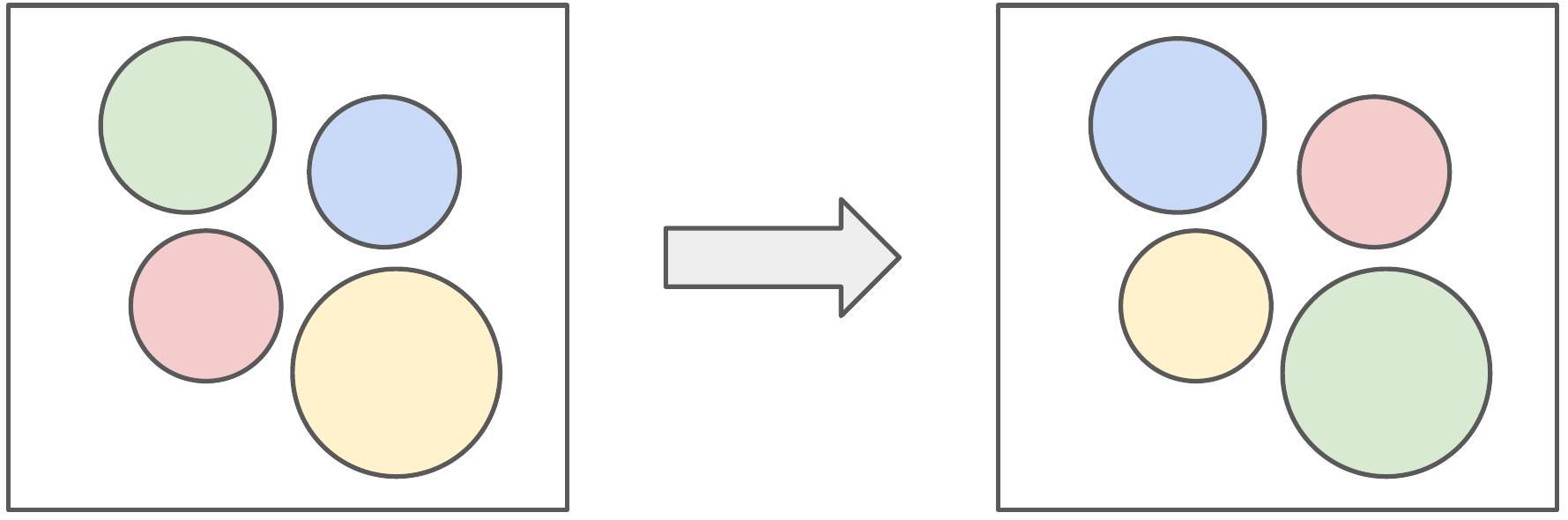}
   \label{fig:transformation_iso}}
\end{center}
\caption{Illustration of granularity consistent transformation and isomorphic transformation on a dataset with $4$ classes, denoted by $4$ disks with different sizes and colors. We show the original dataset $\mathcal{S}$ on the left and the transformed dataset $\mathcal{S}'$ on the right. In granularity consistent transformation, we reduce the within-class distances and enlarge the between-class distances. In isomorphic transformation, we permute the class indices.}
\label{fig:transformation}
\end{figure}

\subsection{Examples of Dataset Granularity Measures}

To show that granularity consistency, isomorphism invariance and scale invariance are self-consistent, we present examples of dataset granularity measures that satisfy all these desired properties. 

\textbf{The Fisher criterion (Fisher).}
Inspired by the \textit{Fisher criterion}~\cite{fisher1936use} used in Fisher's linear discriminant, we use the ratio of mean between-class distance and mean within-class distance as a granularity measure:
\begin{equation}
    \text{Fisher}(\mathcal{S}, d) = \frac{\frac{2}{k(k-1)}\sum_{i \neq j} d(c_i, c_j)}{\frac{1}{n} \sum_{i=1}^n d(x_i, c_{x_i})},
\end{equation}
where $c_i$ is the medoid of $i$-th class and $c_{x_i}$ is the medoid of the ground-truth class of $x_i$. There are in total $\frac{k(k-1)}{2}$ pairs of class medoids.
Fisher can be calculated in $\mathcal{O}(n^2)$ time.

\textbf{The Revised Silhouette index (RS).}
The commonly used \textit{Silhouette index}~\cite{silhouettes} satisfies scale invariance and isomorphism invariance but fails to satisfy granularity consistency.
We present a revised version of the silhouette index that satisfies all desired properties:
\begin{equation}
    \text{RS}(\mathcal{S}, d) = \frac{1}{n} \sum_{i=1}^n \frac{b(x_i)}{a(x_i)},
\end{equation}
where $a(x_i) = \frac{1}{|C|-1} \sum_{x \in C, x \neq x_i} d(x_i, x)$ is the mean intra-class distance between $x_i$ and other samples in the same class $C$ that $x_i$ belongs to, and $b(x_i)$ is the mean distance between $x_i$ and samples from the nearest class that $x_i$ is not belonging to.
Formally, $b(x_i) = \min_{C' \neq C} d(x_i, C')$, where $d(x_i, C') = \frac{1}{|C'|} \sum_{x \in C'} d(x_i, x)$.
RS can be calculated in $\mathcal{O}(n^2)$ time.

\textbf{The Revised Silhouette with Medoids index (RSM).}
To make the computation of RS faster, we use the medoid to represent each class:
\begin{equation}
    \text{RSM}(\mathcal{S}, d) = \frac{1}{n} \sum_{i=1}^n \frac{d(x_i, c_{x_i}')}{d(x_i, c_{x_i})},
\end{equation}
where $c_{x_i}$ is the medoid of the ground-truth class of $x_i$ and $c_{x_i}'$ is the medoid of the nearest class other than the ground-truth class of $x_i$ (\ie, $c_{x_i}' = \arg \min_{c, c \neq c_{x_i}} d(c, x_i)$, $c$ is the medoid of a class).
RSM can be calculated in $\mathcal{O}(nk)$ time, where $k$ is the number of classes.

\textbf{The Ranking index (Rank).}
Inspired by the Precision-Recall curve and mean Average Precision used in information retrieval, we define the ranking index as:
\begin{equation}
    \text{Rank}(\mathcal{S}, d) = 1 - \frac{1}{n-1} \sum_{i=1}^n \Big(1 - \sum_{j=1}^{|R_i|} \frac{j}{R_{ij}} \Big),
\end{equation}
where $R_i$ is the list of ranks for samples in the same class of $x_i$, sorted by their distance between $x_i$.
Specifically, $\sum_{j=1}^{|R_i|} \frac{j}{R_{ij}}$ is the Average Precision (\ie, the area under Precision-Recall curve) used in information retrieval.
The term $\frac{1}{n - 1}$ is to make sure the range of $\text{Rank}(\mathcal{S}, d) \in [0,1]$.
Rank can be calculated in $\mathcal{O}(n^2 \log n)$ time.

\textbf{The Ranking with Medoids index (RankM).}
Similar to the case of RSM, we use the medoid to present each class for faster computation of Rank index.
The ranking with medoids index is defined as:
\begin{equation}
    \text{RankM}(\mathcal{S}, d) = 1 - \frac{k}{n(k-1)} \sum_{i=1}^n \Big(1 - \frac{1}{R_{ic}} \Big),
\end{equation}
where $R_{ic}$ is the rank of the $x_i$'s class medoid among all class medoids and $k$ is the number of classes.
The term $\frac{k}{n(k-1)}$ is to make sure the range of $\text{RankM}(\mathcal{S}, d) \in [0,1]$.
RankM can be calculated in $\mathcal{O}(nk\log k)$ time.

\begin{proposition}[]
\label{prop:1}
Dataset granularity measures Fisher, RS, RSM, Rank and RankM satisfy granularity consistency, isomorphism invariance and scale invariance.
\end{proposition}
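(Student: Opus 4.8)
\emph{Proof strategy.} The plan is to verify the three axioms in turn. Isomorphism invariance and scale invariance hold for all five measures by the same short arguments, so I would dispatch them first; granularity consistency carries essentially all of the work, and there I would handle the two medoid-free measures (RS and Rank), whose arguments are clean, before the three medoid-based ones (Fisher, RSM, RankM), which need an extra ingredient.

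\emph{Isomorphism and scale invariance.} Each of the five measures is built only from the distance function $d$ and from the partition $C=\{C_1,\dots,C_k\}$ that the labels induce: every auxiliary object it uses — a class medoid, the class nearest to a point, a within-class rank list — is defined intrinsically from $d$ and $C$ and never refers to the numeric value of a label. An isomorphic transformation leaves $\mathcal{X}$ and $d$ fixed and only renames the blocks of $C$, so the measure is unchanged. For scale invariance, two observations suffice: (i) medoids, nearest-class assignments and all rank lists depend on $d$ only through the relative order of its values, which $d\mapsto\alpha d$ with $\alpha>0$ preserves; and (ii) Fisher, RS and RSM are ratios whose numerator and denominator are each sums of distance values, so the common factor $\alpha$ cancels, while Rank and RankM are functions of rank lists alone.

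\emph{Granularity consistency, medoid-free measures.} Fix a granularity consistent transformation $d'$ of $d$, so intra-class distances weakly shrink and inter-class distances weakly grow. For RS, each $a(x_i)$ is an average of intra-class distances and so weakly decreases, while $b(x_i)=\min_{C'\neq C}\overline d(x_i,C')$ is a minimum of a family of averages of inter-class distances, each of which weakly increases, so $b(x_i)$ weakly increases because the minimum of a pointwise-larger family is larger; hence each ratio $b(x_i)/a(x_i)$, and therefore RS, weakly increases. For Rank, the key point is that if a same-class sample $p$ precedes a different-class sample $q$ in the distance ordering around $x_i$ under $d$, then $d'(x_i,p)\le d(x_i,p)<d(x_i,q)\le d'(x_i,q)$, so $p$ still precedes $q$ under $d'$; thus passing to $d'$ only moves same-class samples ahead of different-class ones in each ordering, and since Average Precision is a function of the ranked relevance sequence that is non-decreasing under exactly such moves, every $\mathrm{AP}_i$ weakly increases and hence so does Rank.

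\emph{Granularity consistency, medoid-based measures.} For Fisher, RSM and RankM the same monotonicity should drive the argument, but a granularity consistent transformation may move a class's medoid, and that has to be controlled. On the intra-class side this is benign for Fisher — its denominator contributes $\min_{c\in C_j}\sum_{x\in C_j}d(c,x)$ per class, a minimum of sums of intra-class distances each of which weakly decreases, so it weakly decreases no matter which point attains the minimum — whereas for RSM and RankM even the within-class terms reference the medoid directly and so hinge on the same point. That point, which I expect to be the main obstacle, is the inter-class side: for Fisher one needs $\sum_{i\ne j}d'(c_i',c_j')\ge\sum_{i\ne j}d(c_i,c_j)$ with $c_i'$ the possibly new $d'$-medoid of class $i$, and for RSM and RankM one needs the distance (respectively the rank) from $x_i$ to its nearest other-class medoid to grow. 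The intended route is to argue that shrinking within-class distances only makes each class more concentrated, so its medoid remains at a point all of whose distances to the other classes have increased; making this rigorous — in particular ruling out that tightening a class drags its medoid toward a neighbouring class — is the crux, and may call for restricting to transformations that preserve the identity of each medoid or a comparably mild extra hypothesis. Granted that stability, the inter-class terms become monotone exactly as in the medoid-free case, and combining the two sides gives $g(\mathcal{S},d')\ge g(\mathcal{S},d)$ for Fisher, RSM and RankM.
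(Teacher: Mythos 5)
Your handling of isomorphism invariance, scale invariance, and of granularity consistency for the two medoid-free measures is correct and essentially coincides with the paper's Appendix~D argument: for RS the paper likewise uses $a'(x_i)\le a(x_i)$ and $b'(x_i)\ge b(x_i)$, and for Rank it uses exactly your observation, recorded as $R'_{ij}\le R_{ij}$, so that each Average Precision term weakly increases. The divergence is at the point you yourself single out as the crux. For Fisher, RSM and RankM the paper never confronts medoid movement at all: it reuses the same points $c_i$, $c_{x_i}$, $c'_{x_i}$ (and the same medoid rank $R_{ic}$) before and after the transformation, i.e.\ it silently proves the ``fixed-medoid'' version of the claim that you contemplate adding as an explicit hypothesis. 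So what you have written proves exactly what the paper argues for RS and Rank, and no less than what it actually argues (though less than what it asserts) for the medoid-based measures.

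That said, as a proof of Proposition~\ref{prop:1} your proposal has a genuine gap, and the step you hoped to supply (``tightening a class cannot drag its medoid toward a neighbouring class'') is false in general, so the gap cannot be closed along that line. A granularity consistent transformation may shrink within-class distances non-uniformly while leaving every between-class distance unchanged (weak inequalities permit this). By shrinking distances inside one class $B$ only, you can move its medoid from a point $b_1$ to a point $b_2$ that already lies closer (under $d$, hence under $d'$, since cross-class distances are untouched) to many samples of another class $A$ than their own medoid $c_A$ does; since $d$ need not satisfy the triangle inequality, nothing forbids this configuration. For each such sample the nearest-other-class-medoid distance drops and the rank of its own medoid worsens, while its own-class term is unchanged, so RSM and RankM strictly decrease under a granularity consistent transformation; a similar construction attacks Fisher's numerator $\sum_{i\neq j}d(c_i,c_j)$. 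Hence granularity consistency for the medoid-based measures genuinely requires either the fixed-medoid reading that the paper tacitly adopts or an extra hypothesis of the kind you mention; your instinct to isolate this as the main obstacle is sound, but the proposal as written does not establish the proposition for Fisher, RSM and RankM, and no ``medoid stability'' argument can, because the measures as literally defined recompute medoids under $d'$.
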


\begin{proof}
The proof is given in Appendix D.
\end{proof}

Instead of the aforementioned measures, there exist other measures that satisfy all desired properties, including the Baker-Hubert Gamma index (BHG)~\cite{baker1975measuring,goodman1979measures} for rank correlation and the C index~\cite{hubert1976general} for clustering quality assessment.
However, the time complexity of BHG and the C index is much higher ($\mathcal{O}(n^4)$ for BHG and $\mathcal{O}(n^3)$ for the C index) and thus are not scalable on large datasets.
For a more comprehensive analysis of other dataset granularity measures and proofs showing that they satisfy our proposed properties, please refer to Appendix E.

\begin{figure}[t]
\begin{center}
\includegraphics[width=0.9\columnwidth]{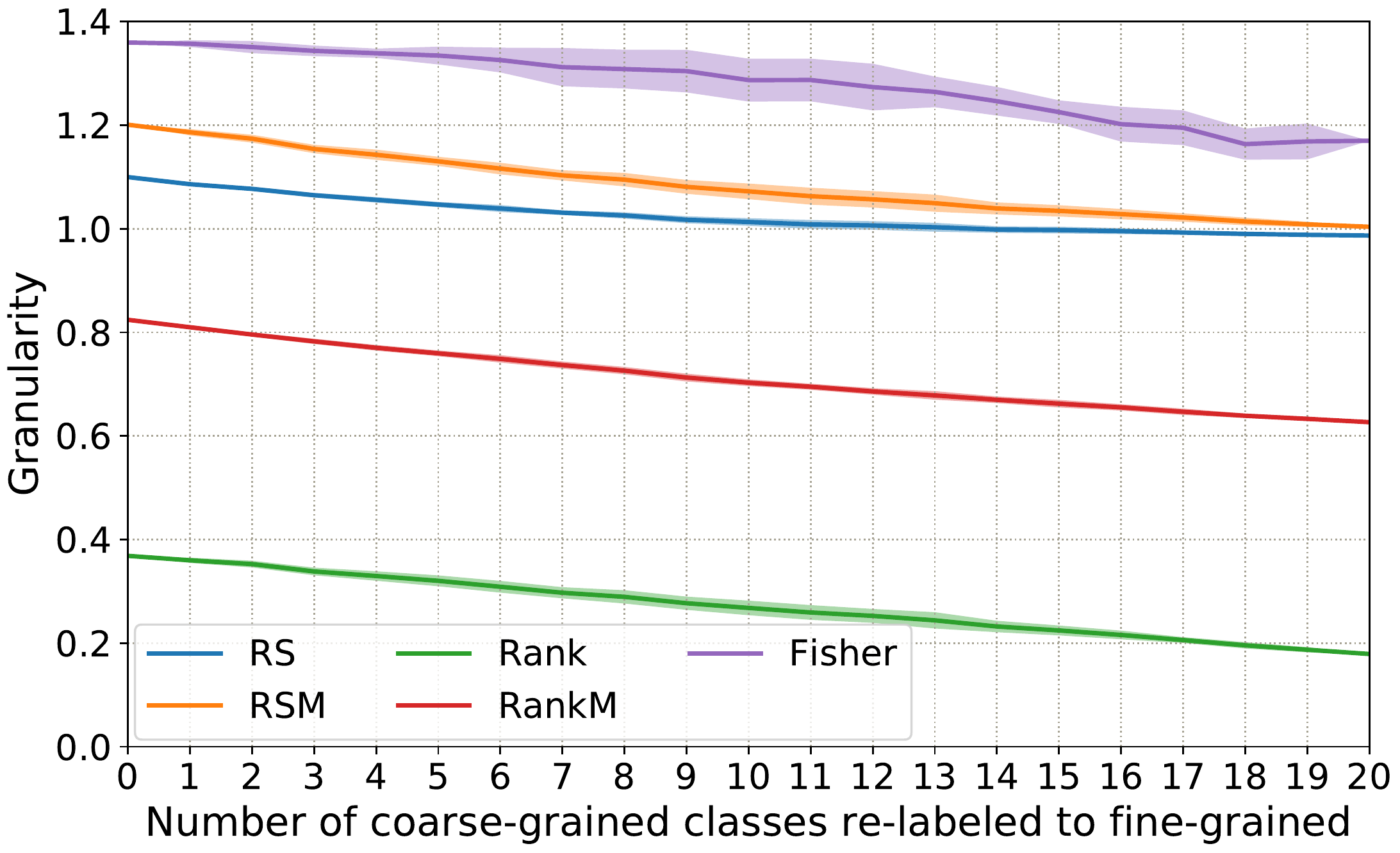}
\end{center}
\caption{Dataset granularity measures RS, RSM, Rank and RankM on CIFAR-100 dataset with increasing number of coarse-grained classes re-labeled to fine-grained. The solid line denotes mean and the shaded region represents standard deviation.}
\label{fig:eval-cifar}
\vspace{-2mm}
\end{figure}

\subsection{Assessment of Granularity Measures}

We evaluate the quality of granularity measure in the context of fine-grained class discovery. Consider a dataset $\mathcal{S}$ annotated with coarse-grained labels and a distance function $d$ as the Euclidean distance between features extracted from a deep network, when we re-label samples from a coarse-grained class with fine-grained sub-classes, the granularity $g(\mathcal{S}, d)$ should be higher. We build the oracle of relative dataset granularity based on such information and use it to evaluate granularity measures.

We use CIFAR-100 dataset~\cite{cifar} with 20 coarse-grained classes, each containing 5 fine-grained classes.
We train a ResNet-20~\cite{resnet} using labels of 20 coarse-grained classes on the CIFAR-100 training set and then extract features on the test set.
When calculating granularity scores, we gradually re-label each coarse-grained class until all 20 coarse-grained classes are re-labeled into 100 fine-grained classes.
To capture the randomness of re-labeling order, we shuffle the classes and repeat the experiments 100 times.
Figure~\ref{fig:eval-cifar} shows the granularity score on CIFAR-100 in terms of how many coarse-grained classes have been re-labeled.
Consistent with the oracle, all granularity measures except Fisher are monotonically decreasing.
Based on results in Figure~\ref{fig:eval-cifar}, RankM has a smoother change of granularity score, less variance and relatively low time complexity compared with others.
Therefore, we choose to use \textit{the Ranking with Medoids index (RankM)} as the dataset granularity measure.
Unless otherwise stated, we use the term ``granularity'' of a dataset $\mathcal{S}$ and a distance function $d$ to refer to $\text{RankM}(\mathcal{S}, d)$ in the rest of our paper. 
Note that RankM is only a local optimal among our proposals.
Other instances of dataset granularity measures that satify all of our disired properties could also be used for measuring granularity, and we leave finding a more optimal one as future work.

\begin{table}[t]
\small
\begin{center}
\begin{tabular}{ l|r|r } 
\Xhline{1.0pt}
\textbf{Dataset} & \textbf{\# class} & \textbf{\# train$\blacktriangle$ / \# test} \\
\Xhline{1.0pt}
Oxford Flowers-102~\cite{flower_102} & 102 & 2,040 / 6,149 \\
CUB200 Birds~\cite{cub200}   & 200 & 5,994 / 5,794 \\
FGVC Aircraft~\cite{airplane}       & 100 & 6,667 / 3,333 \\
Stanford Cars~\cite{stanford_car}  & 196 & 8,144 / 8,041 \\
Stanford Dogs~\cite{stanford_dog}  & 120 & 12,000 / 8,580 \\
NABirds~\cite{nabirds}        & 555 & 23,929 / 24,633 \\
CIFAR-10~\cite{cifar}         & 10 & 50,000 / 10,000 \\
CIFAR-100~\cite{cifar}         & 100 & 50,000 / 10,000 \\
Food-101~\cite{food101}        & 101 & 75,750 / 25,250 \\
\Xhline{1.0pt}
iNaturalist-2017~\cite{inaturalist} & 5,089 & 579,184 / 95,986 \\
ImageNet~\cite{imagenet,ilsvrc} & 1,000 & 1,281,167 / 50,000 \\
\Xhline{1.0pt}
\end{tabular}
\end{center}
\caption{Datasets used in experiments, where iNaturalist-2017 and ImageNet are used to pre-train models for feature extraction.}
\label{tab:dataset}
\vspace{-2mm}
\end{table}

\subsection{Distance Function}
\label{sec:dist_func}
In this section, we use the selected measure, \textit{RankM}, to study the effect of distance function using features extracted from pre-trained deep networks.
We conduct controlled experiments and analyze results from two perspectives: the absolute granularity value and the relative granularity order across different datasets.


To understand how sensitive the dataset granularity is to different choices of distance functions, we fix the pre-trained dataset as ImageNet and perform experiments using deep networks with different architectures including ResNet~\cite{resnet}, DenseNet~\cite{densenet} and Inception-V3~\cite{inception-v3}. The 7 datasets we measured are Oxford Flowers-102~\cite{flower_102}, CUB200-2011 Birds~\cite{cub200}, FGVC Aircraft~\cite{airplane}, Stanford Cars~\cite{stanford_car}, Stanford Dogs~\cite{stanford_dog}, and NABirds~\cite{nabirds}, Food-101~\cite{food101} (see Table~\ref{tab:dataset} for more details). Dataset granularity is calculated using features extracted from the test set.

As Figure~\ref{fig:g-network} shows, the absolute granularity value of a dataset is senstitive to model architecture. For example, DenseNet generally yields higher granularity compared with ResNet and Inception-V3. However, the relative order of granularity across datasets is robust (\eg, Aircraft always has the lowest granularity while Stanford-Dogs the highest).



\begin{figure}[t]
\begin{center}
\includegraphics[width=0.9\columnwidth]{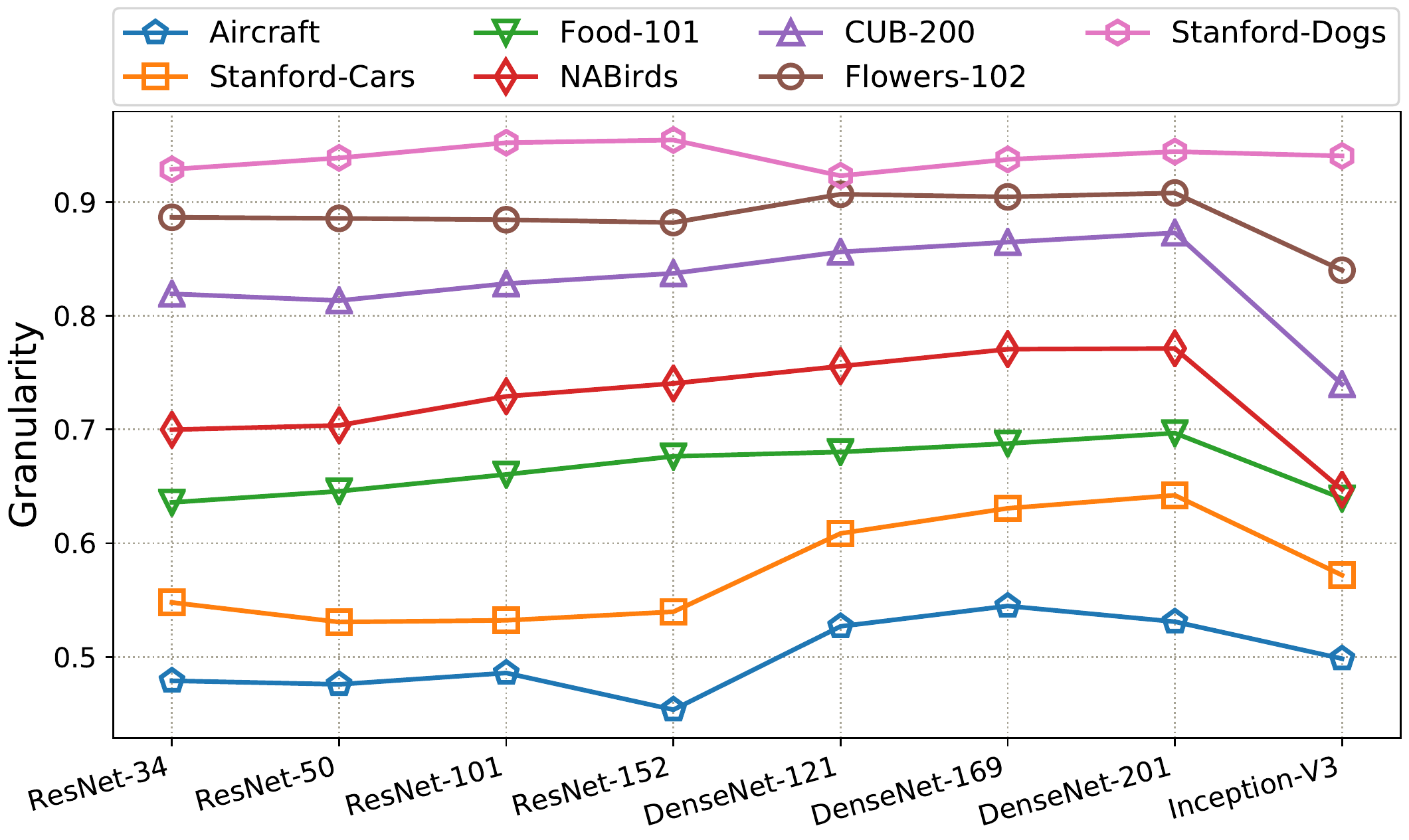}
\end{center}
\caption{Dataset granularity using features extracted from different networks pre-trained on ImageNet.}
\label{fig:g-network}
\vspace{-2mm}
\end{figure}

\section{Bitter and Sweet Tasters}
The conclusion in Section~\ref{sec:dist_func} simplifies our experiment setting without loss of generality: to compare granularity across datasets, in the following sections we use Euclidean distance on features extracted from a ResNet-50 pre-trained on ImageNet as our distance function.
On a given dataset, we aim to find bitter and sweet tasters containing different set of classes with drastically different granularity.

A dataset with $n$ classes has $2^n$ subsets containing different sets of classes.
Enumerating all subsets and measure their granularity to find the bitter and sweet taster is intractable.
We therefore propose a simplified strategy to approximate the optimal solution.
First, we calculate the pairwise granularity between all pairs of classes in the dataset.
Then, to construct the sweet taster, we start with the pair of classes with the largest granularity (most coarse-grained), and iteratively add the class with maximum averaged pairwise granularity between already included classes, until the taster contains the target number of classes.
To construct the bitter taster, we start with a seed set of classes that contain pairs with smallest pairwise granularity, and iteratively add the class with the smallest minimum (instead of the smallest averaged used in sweet taster) pairwise granularity between already included classes, until the taster contains the target number of classes.
The reason to have a seed set for the bitter taster is that we want to include a diverse set of clusters (\eg, a cluster of sparrows and a cluster of warblers in CUB-200). A seed set provides a good coverage of clusters in the dataset.
We set the size of seed set to be $10\%$ of the total number of classes. The size of bitter and sweet tasters are both set to $25\%$ of the total number of classes.

Figure~\ref{fig:overview} shows 10 classes from the bitter and sweet tasters respectively constructed on CUB-200 dataset.
We also tried to increase the size of a sweet taster of CUB-200 by adding classes until its granularity matches CIFAR-10's.
It ended up containing 46 classes, meaning a subset of 46 out of 200 classes in widely considered fine-grained dataset CUB-200 is in fact only as ``fine-grained'' as CIFAR-10.


\section{Experiments}


Equipped with the ability to generate \bitter{} and \sweet{} \taster{} of a given labeled dataset, we study the relation between dataset granularity and model performance on tasks that are being actively studied in computer vision, including image recognition, transfer learning, few-show learning and adversarial attack. For each task, we follow the conventional practices and adopt widely-used benchmarks implemented or reproduced in PyTorch.
Code, data and pre-trained models are available at: {\url{https://github.com/richardaecn/dataset-granularity}}.

\begin{figure}[t]
\begin{center}
\includegraphics[width=0.95\columnwidth]{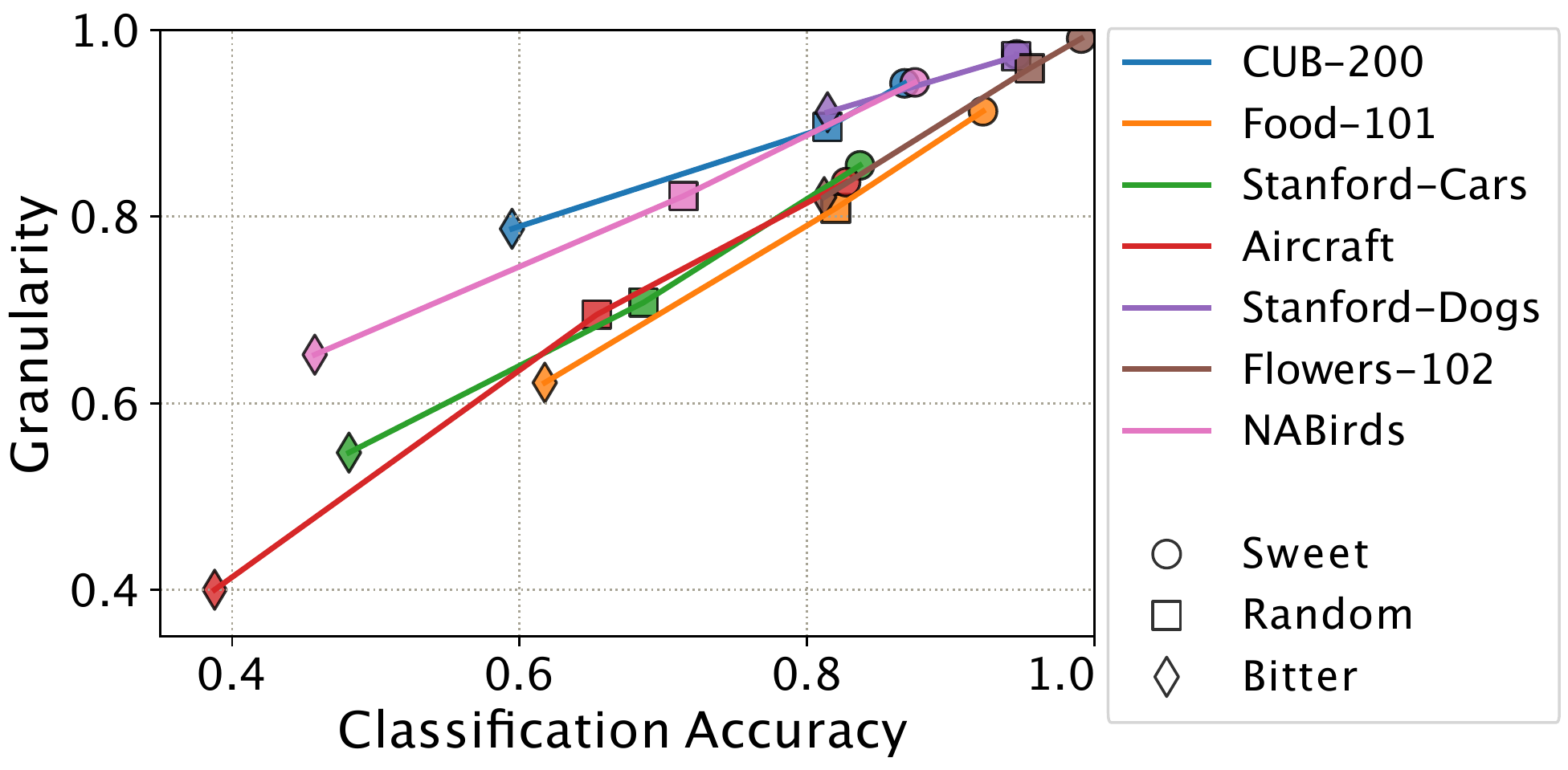}
\end{center}
\caption{Classification accuracy \vs granularity on different tasters. Each dataset is represented by line segments with the same color. Markers with different shapes represent different tasters.}
\label{fig:classification}
\end{figure}

\subsection{Granularity versus Classification}

Image classification is a classic task in computer vision.
We perform experiments on \sweet{} and \bitter{} \tasters{} constructed on 7 datasets by finetuning a ResNet-50 pre-trained on ImageNet for image classification. 
To make sure the distance functions are the same across all datasets for comparing dataset granularity, we freeze the parameters in the network and only update the classifier during training.
Figure~\ref{fig:classification} shows the classification accuracy \vs granularity on all tasters.
The performance on bitter taster is significantly worse than sweet taster, suggesting classification on fine-grained dataset is more difficult compared with coarse-grained dataset.
Therefore, the bitter taster can be easily used for benchmarking image classification algorithms in a stricter and fine-grained manner, especially for datasets that are saturated for benchmarking purposes.
To understand the relation between dataset granularity and classification accuracy, we further conduct experiments on CIFAR and results are shown in Figure~\ref{fig:classification-cifar}.
Dataset granularity correlates quite well with classification accuracy (with Pearson's $\rho = 0.998$), suggesting granularity could be used as an indicator for the classification performance.

\begin{figure}[t]
\begin{center}
\includegraphics[width=0.95\columnwidth]{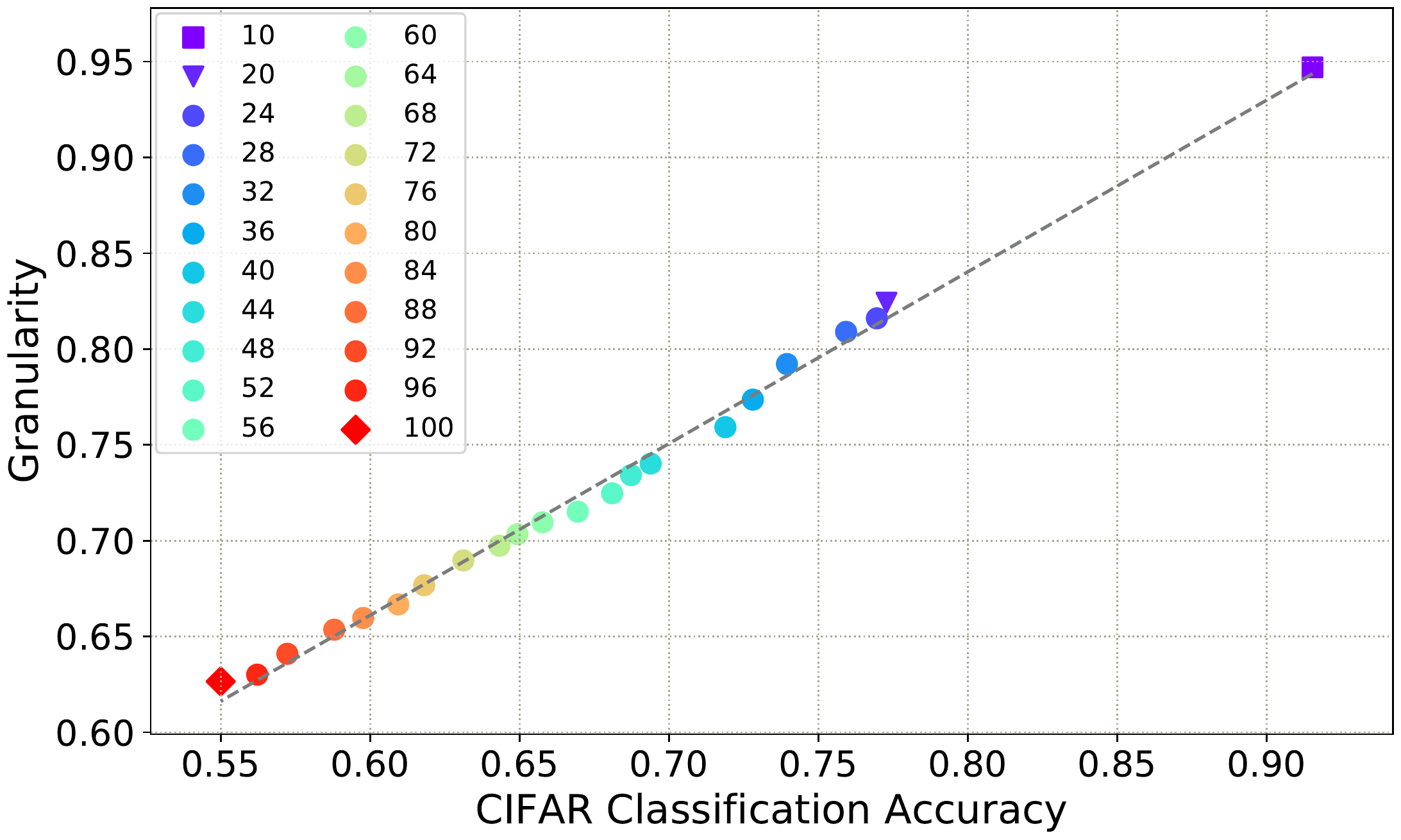}
\end{center}
\caption{Dataset granularity correlates well with classification accuracy on CIFAR. The purple square marker represents CIFAR-10 dataset. Other markers represent the CIFAR-100 with different number of classes (by re-labeling coarse-grained classes with fine-grained), where the purple triangle denotes CIFAR-100 with 20 coarse-grained labels and the red diamond represents CIFAR-100 with 100 fine-grained labels.}
\label{fig:classification-cifar}
\end{figure}

\begin{figure}[t]
\begin{center}
\includegraphics[width=0.9\columnwidth]{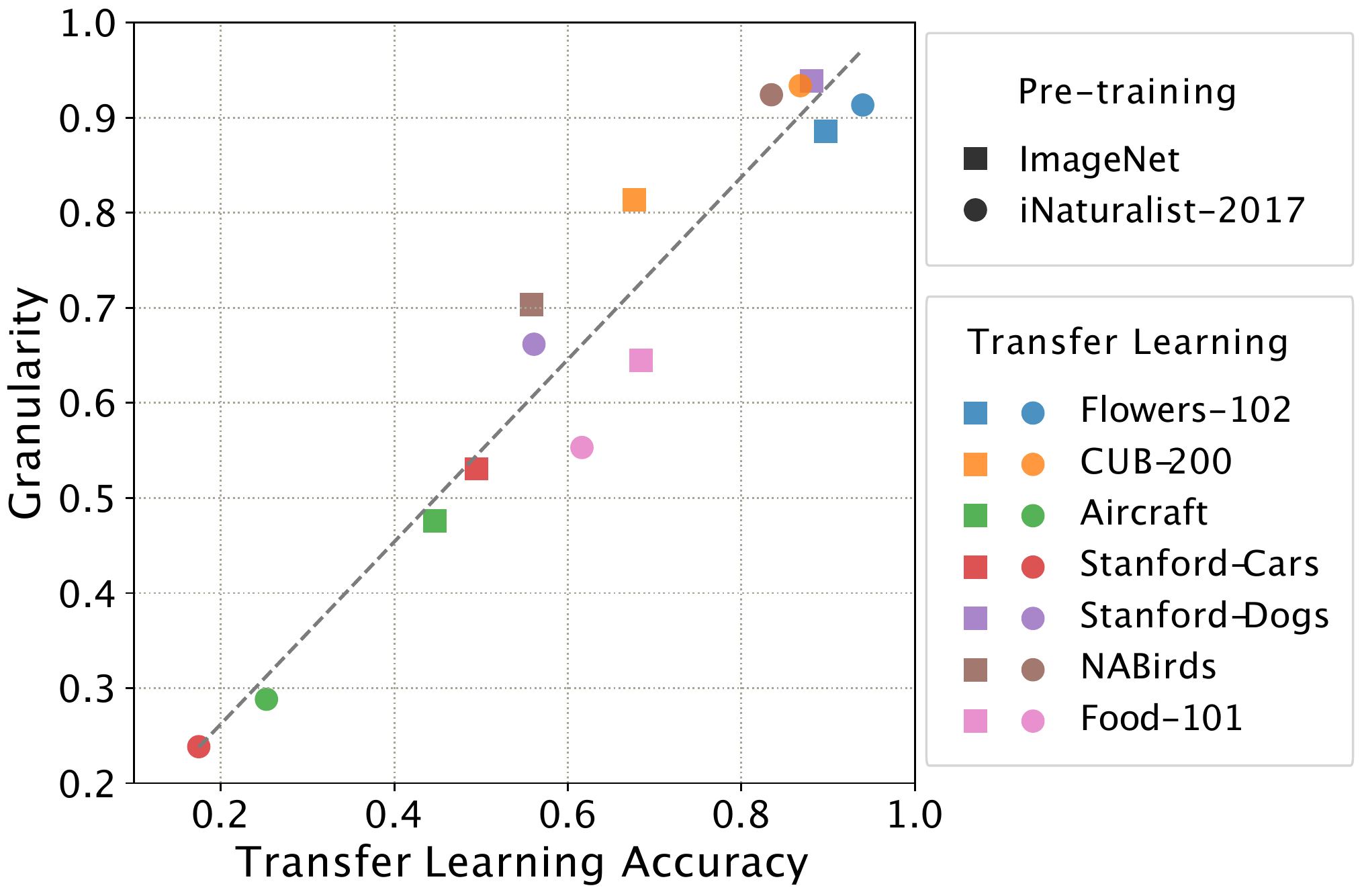}
\end{center}
\caption{Transfer learning performance and granularity on 7 datasets. Each dataset is represented by markers with the same color. We use markers with different shapes to represent pre-training on different datasets.}
\label{fig:transfer}
\vspace{-4mm}
\end{figure}

\begin{figure}[t]
\begin{center}
\subfigure[\scriptsize{Stanford-Dogs - IN (0.951)}]{
\includegraphics[width=0.4\columnwidth]{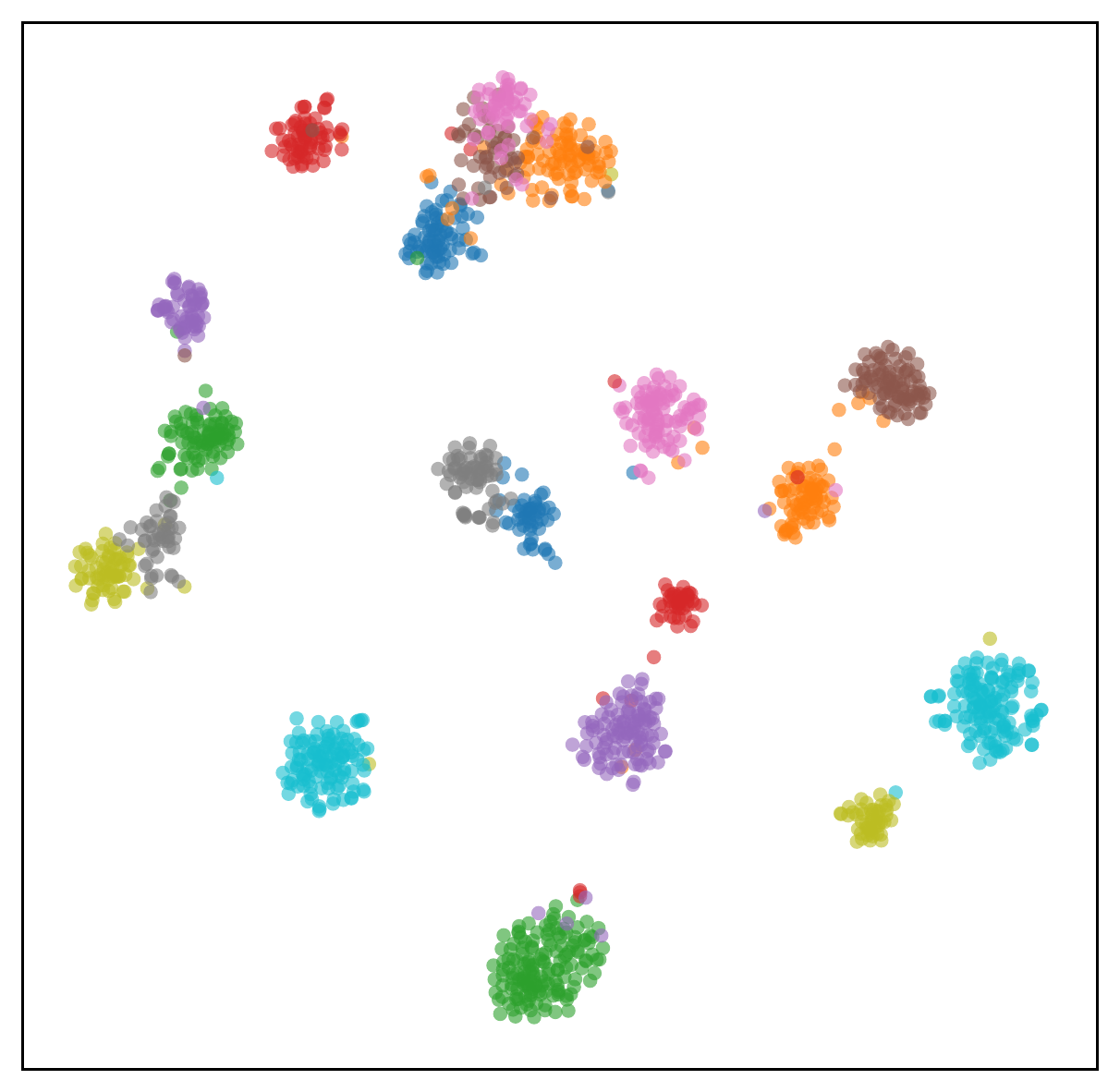}}
\subfigure[\scriptsize{Stanford-Dogs - iNat (0.586)}]{
\includegraphics[width=0.4\columnwidth]{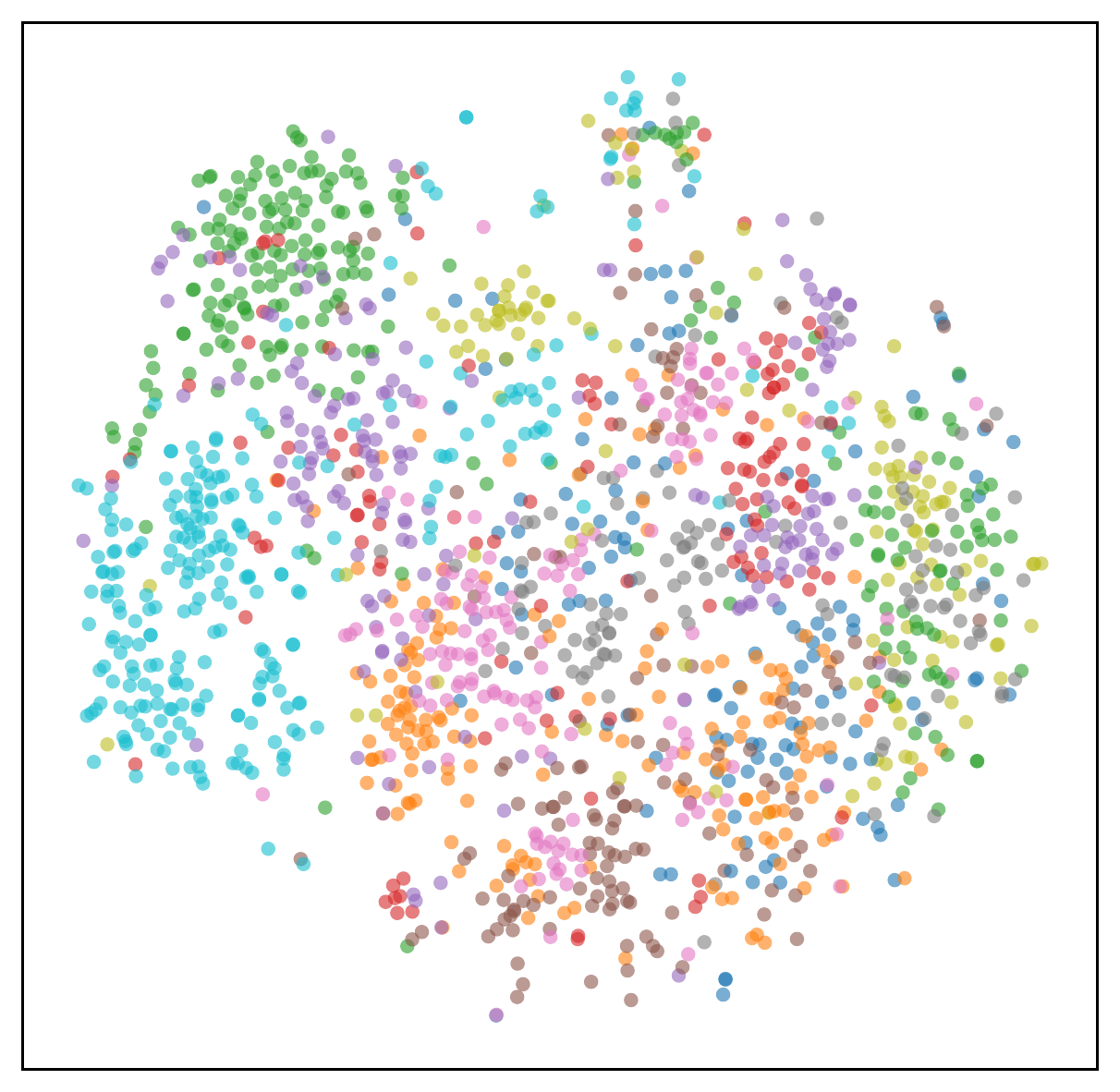}}
\subfigure[\scriptsize{NABirds - IN (0.620)}]{
\includegraphics[width=0.4\columnwidth]{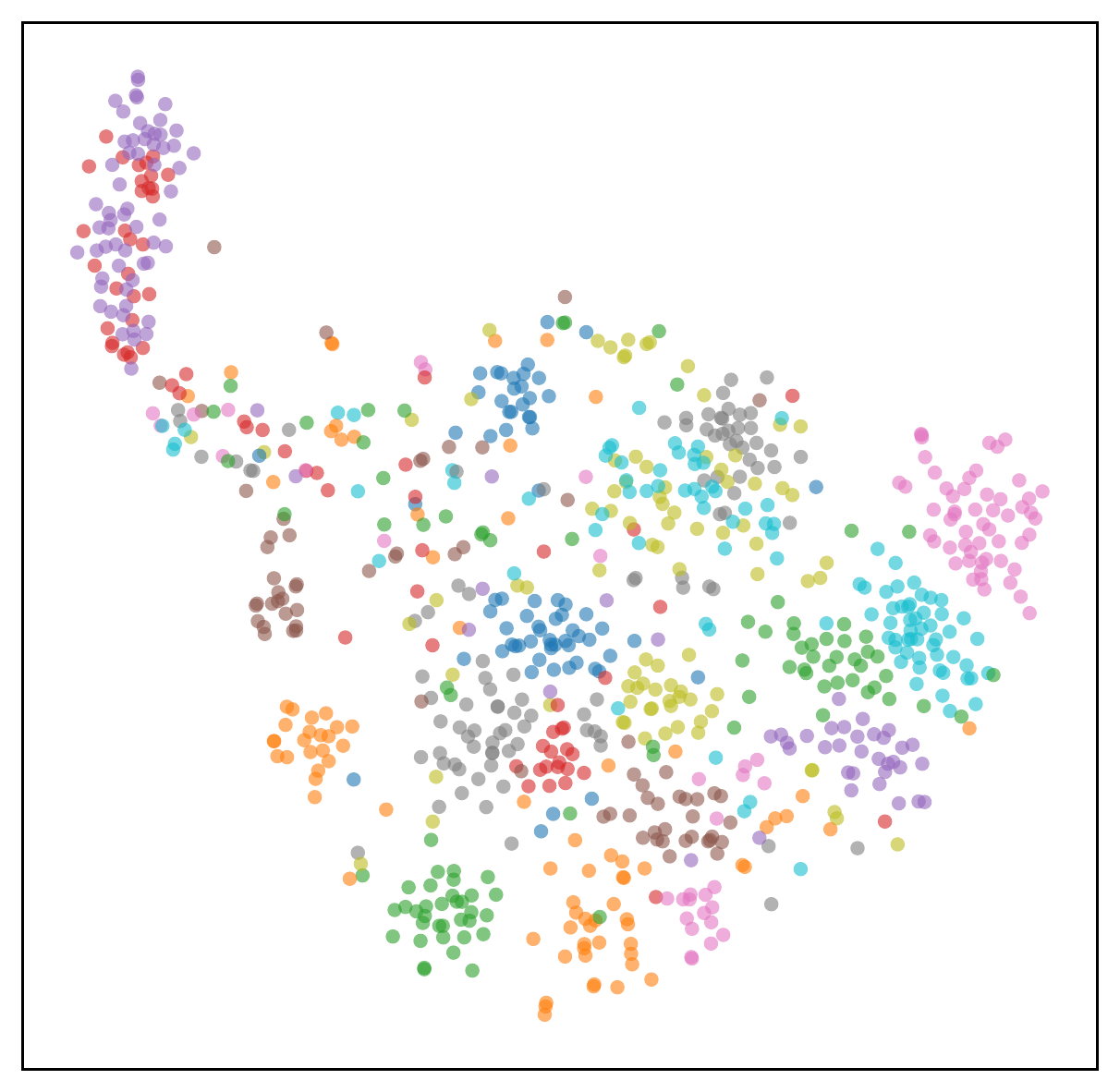}}
\subfigure[\scriptsize{NABirds - iNat (0.925)}]{
\includegraphics[width=0.4\columnwidth]{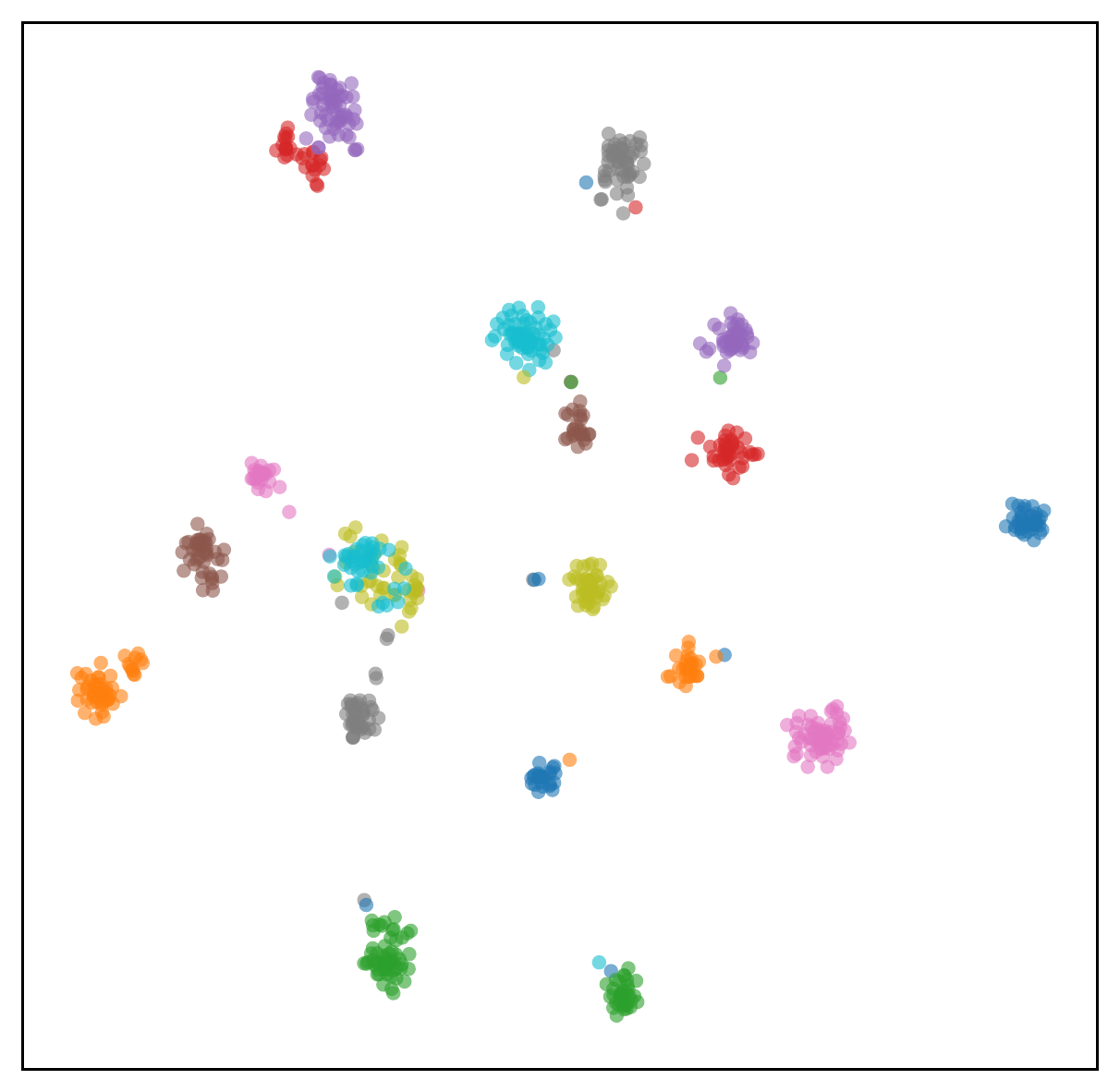}}
\end{center}
\caption{t-SNE embedding of first 20 classes from Stanford-Dogs and NABirds, with ImageNet pre-training (IN) on the left and iNaturalist-2017 (iNat) pre-training on the right. The granularity is shown in the bracket.}
\label{fig:tsne-fgvc}
\vspace{-4mm}
\end{figure}

\subsection{Granularity versus Transferability}
\label{sec:exp_transfer}

Transferring features learned from large-scale datasets to small-scale datasets has been extensively used in our field.
To examine the relationship between dataset granularity and feature transferability, we train ResNet-50 networks on 2 large-scale datasets: ImageNet and iNaturalist-2017. 
Then, we transfer the learned features to 7 datasets via fine-tuning by freezing the network parameters and only update the classifier.
We evaluate the performance of transfer learning by the classification accuracy on the test set.
The dataset granularity is calculated based on test set features.

From results in Figure~\ref{fig:transfer}, we have the following observations: 
(1) In general, similar to image classification results, dataset granularity correlates well with transfer learning performance (with Pearson's $\rho = 0.980$) and could be used as an indicator for the transfer learning performance as well. The more coarse-grained the dataset is (higher granularity), the easier it is to transfer to.
(2) The choice of pre-trained data has a huge effect on the dataset granularity and transfer learning performance. 
For example, when using ImageNet pre-trained features, the granularity of Stanford-Dogs is much higher than using iNaturalist-2017, while the granularity of NABirds is much lower.
This is illustrated qualitatively by t-SNE embeddings in Figure~\ref{fig:tsne-fgvc}.
A key to make a dataset less fine-grained is to use features extracted from a network pre-trained on more appropriate data.
This has been also verified by recent work on domain-specific adaptive transfer learning~\cite{ge2017borrowing,cui2018large,ngiam2018domain}.

\subsection{Granularity versus Few-Shot Learning}
Few-shot learning requires the model to learn features that are generic enough to efficiently represent data in test set that is disjoint from training set.
We follow the conventional settings of 5-way 1-shot experiments as described in~\cite{sung2018learning}, and use 3 benchmark approaches for evaluation: Model-Agnostic Meta-Learning (MAML)~\cite{finn2017model}, Prototypical Networks~\cite{snell2017prototypical}, and Relation Network~\cite{sung2018learning}. 
The granularity of miniImageNet~\cite{ravi2016optimization} is close to that of the entire ImageNet, so to observe the effects of dataset granularity, we extract the \bitter{} and \sweet{} \tasters{} of ImageNet and split it into train, validation and test set in the same fashion as the original miniImageNet, and name them \textit{Sweet miniImageNet} and \textit{Bitter miniImageNet}. To analyze current benchmarks' performance with respect to granularity, we train the models on the original miniImageNet and test them on our \bitter{} and \sweet{} miniImageNet (Bitter-test and Sweet-test in Table~\ref{tab:few-shot}). Then repeat the experiment but use our \tasters{} for both training and testing (Bitter and Sweet in Table~\ref{tab:few-shot}).
Compared with the original split, the test accuracy is lower by more than 10\% on our \bitter{} \taster{}, regardless of the training set. 
This suggests our \bitter{} miniImageNet provides a more rigorous benchmark for few-shot learning. For testing on \bitter{} \taster{}, training on \bitter{} \taster{} does not bring much benefits compared with training on the original training set, but for testing on \sweet{} \taster{}, the benefits are more obvious. Besides, there is no change in performance ranking: networks that perform better on the original miniImageNet also achieve higher accuracy on our \tasters{}, indicating the robustness of using our \tasters{} as benchmarks.

\begin{table}[t]
\begin{center}
 \begin{tabular}{c | c c c } 
 \Xhline{1.0pt}
 & MAML & Prototypical & Relation \\
 \Xhline{1.0pt}
 Original & 44.3 / 48.7$^\dagger$ & 46.6 / 49.4$^\dagger$ & 50.4 / 50.4$^\dagger$\\
 \hline
 Bitter-test & 34.3 & 36.1 & 38.8 \\
 Sweet-test & 50.1 & 54.7 & 58.7 \\
 \hline
 Bitter & 32.6 & 35.0 & 37.4 \\
 Sweet & 57.2 & 61.3 & 58.8 \\
 \Xhline{1.0pt}
 \end{tabular}
\end{center}
\caption{Test accuracies of few-shot learning on miniImageNet and \bitter{} and \sweet{} miniImageNet. $^\dagger$ indicates the reported number in the original paper.}
\label{tab:few-shot}
\vspace{-2mm}
\end{table}

\subsection{Granularity versus Adversarial Attack}
Recent studied have shown that neural networks are vulnerable to adversarial examples, malicious inputs that are constructed by adding to a natural image minimal perturbation inperceptible to human eyes. Similar to few-shot learning, adversarial attack has been mostly tested on generic datasets like ImageNet and CIFAR. The performance of adversarial attacks with respect to dataset granularity is an interesting topic lacks investigation.

DeepFool~\cite{moosavi2016deepfool} is a benchmark adversarial attack designed to fool the model with least amount of perturbations possible. It proposes to use $\hat{\rho_{adv}}(k) = \frac{1}{\|\mathcal{D}\|} \sum_{x\in{\mathcal{D}}} \frac{\|\mathbf{\hat{r} (x)}\|_{p}}{\|\mathbf{x}\|_{p}}$ to measure the robustness of a classifier $k$, wherein given an image $\mathbf{x}$ from test set  $\mathcal{D}$, $\hat{\mathbf{r}}(x)$ is the estimated minimal perturbation obtained
using DeepFool, measured by distance function $\ell_p$ ($p \in [1,\infty)$ and we follow the original paper to use $p=2$). Intuitively, given a classifier, \padv{} is the average minimum perturbation (normalized by the magnitude of the image) that images required for successful attacks.
We apply it in a different way from the original paper: instead of fixing the dataset and comparing across models, we fix the model and compare across datasets. 
We fine-tune a ResNet-50~\cite{resnet} pre-trained on ImageNet~\cite{imagenet} on the \bitter{}, \sweet{} and random \tasters{} of the same size (50 classes) constructed from CUB-200 respectively, and calculate \padv{} for each as the target model.
Then \padv{} can be interpreted as the estimated average minimal distance of a sample to its closest class boundary. Therefore, the fooling ratio needs to be 100\% to measure \padv{}, and default value of maximal number of iterations in the paper, 50, can achieve it already. We find the \padv{} of \sweet{} \taster{} is more than $2\times$ larger than that of \bitter{} \taster{}, and that of random \taster{} lies in between. This supports our understanding that there is an positive correlation between the dataset granularity and inter-class distance. As dataset granularity decreases, the average margin between neighboring classes decreases, therefore the adversarial attack is easier to succeed. To verify our hypothesis, we set maximal number of iteration to 5 as a stricter constraint, and the second row of Table~\ref{tab:adv_attack} shows the model accuracy after attack. As we expected, the lower the dataset granularity, the lower the accuracy and the easier the attack.

\begin{table}[t]
\begin{center}
 \begin{tabular}{c | c c c} 
 \Xhline{1.0pt}
 & \multicolumn{3}{c}{CUB-200} \\
 \cline{2-4}
 & Bitter & Sweet & Random \\ 
 \Xhline{1.0pt}
 \padv{} & 0.0095 & 0.0216 & 0.0159\\ 
 Test Accuracy & 1.07\% & 6.94\% & 3.74\%\\ 
 \Xhline{1.0pt}
\end{tabular}
\end{center}
\caption{\padv{} and test accuracy of attacking a ImageNet pre-trained ResNet-50 fine-tuned on the \bitter{}, \sweet{} and random \tasters{} of CUB-200.}
\label{tab:adv_attack}
\vspace{-2mm}
\end{table}

\section{Conclusion}

In this work, as a step toward a precise definition of \textit{fine-grained}, we present an axiomatic framework based on clustering theory for measuring dataset granularity.
We define a dataset granularity measure as a function that maps a labeled dataset and a distance function to a real number and then describe three desirable properties for the measure: granularity consistency, isomorphism invariance, and scale invariance.
We give five qualified examples of granularity measure and choose the Ranking with Medoids index (RankM) based on assessment on a hierarchically labeled CIFAR dataset. With this measure, we conduct experiments to study properties of dataset granularity and how to construct fine-grained and coarse-grained subsets of a dataset.
We also show that dataset granularity correlates well with the performance of classification, transfer learning, few-shot learning and adversarial attack.

We show the granularity of current fine-grained dataset is often brought up by a large set of relatively coarse-grained classes. One major reason is that when building a dataset, we were not clear about how to quantitatively measure and control important properties of the dataset. We believe granularity is only one of these properties, and only provides a preliminary effort. In the future, we would like to: 1) explore additional axioms of dataset granularity (if any) and study the completeness of our framework, 2) find other fundamental dataset properties that cannot be captured by granularity, and 3) apply granularity measure in constructing datasets and benchmarking existing methods.

\vspace{2mm}
\par\noindent\textbf{Acknowledgments:}
We would like to thank Yixuan Li, Bharath Hariharan and Pietro Perona for their insights and helpful discussions.



\newpage
\section*{Appendix A: Simulation}
To better understand the characteristics of granularity measure candidates, we perform a study using simulated datasets with 2 classes, each class has $1000$ samples.
Samples for each class are drawn from a 2-dimensional multivariate normal distributions with unit covariance matrix and different means in horizontal axis.
Formally, $X_1 \sim \mathcal{N}(\Sigma, \mu_1)$ and $X_2 \sim \mathcal{N}(\Sigma, \mu_2)$, where $\Sigma = \begin{psmallmatrix}1 & 0\\ 0 & 1\end{psmallmatrix}$, $\mu_1 = (0, 0)^\top$ and $\mu_2 = (m, 0)^\top$.
Figure~\ref{fig:sim-gaussian} illustrates the simulated datasets with different $m$.
The larger the $m$ is, the more separated the two classes are.

We generate simulated datasets with variant $m$ and calculate their granularity measure scores using Euclidean distance as the distance function.
We repeat the this process $10$ times and show the mean and standard deviation in Figure~\ref{fig:score-gaussian}.
From the figure we can observe that the variance of the scores are small and in general, granularity measures Fisher, RS, RSM, Rank and RankM are all able to give higher scores when two classes are further separated.

\section*{Appendix B: Learning Difficulty}
\label{sec_iccv19:exp_difficulty}

Intuitively, learning on fine-grained data is more difficult compared with coarse-grained data.
To quantitatively understand the relationship between learning difficulty and granularity, we use the training error rate of a linear logistic regression model (LR) trained with deep features as an estimate of training difficulty.

We conduct comprehensive experiments on CIFAR. Specifically, we train a ResNet-20 from scratch on the training set. We then extract features on the test set and use them to train a LR and measure granularity and the training error rate.
Figure~\ref{fig:difficulty-cifar} presents results on CIFAR datasets, including CIFAR-10 and CIFAR-100 with variant number of coarse-grained classes re-labeled as fine-grained.
Further more, we examine the case of binary classification by forming a dataset with two specific classes for each pair of classes in CIFAR-10 and CIFAR-100, results are shown in Figure~\ref{fig:difficulty-cifar-pair}. 
We observe that dataset granularity is highly correlated with learning difficulty.
In addition, we find that machine's perception of granularity makes sense to human: (``cat'', ``dog''), (``otter'', ``seal'') and 
(``boy'', ``girl'') are the most fine-grained pairs of classes and are also relatively difficult to differentiate for human.

Qualitatively, we show t-SNE embeddings~\cite{tsne} of sweet and bitter tasters with drastically different granularities from CIFAR-10 and CIFAR-100 in Figure~\ref{fig:tsne-cifar}.
Classes with high granularity are well separated, whereas classes with low granularity are mixed together.

Other than linear logistic regression, we also tried to use the training error rate of deep networks to indicate the learning difficulty.
However, we find deep networks tend to have close to 0 training error rate on all datasets.
As pointed out by Zhang~\etal~\cite{zhang2016understanding}, deep network can easily fit a training set, especially when the number of parameters exceeds the number of data points as it usually does in practice.
In such scenario, if we want to measure the learning difficulty of a deep network, we might need to seek for other metrics such as the intrinsic dimension~\cite{li2018measuring} of the network.

\begin{figure}[t]
\begin{center}
\subfigure[$m=1$]{
\includegraphics[width=0.45\columnwidth]{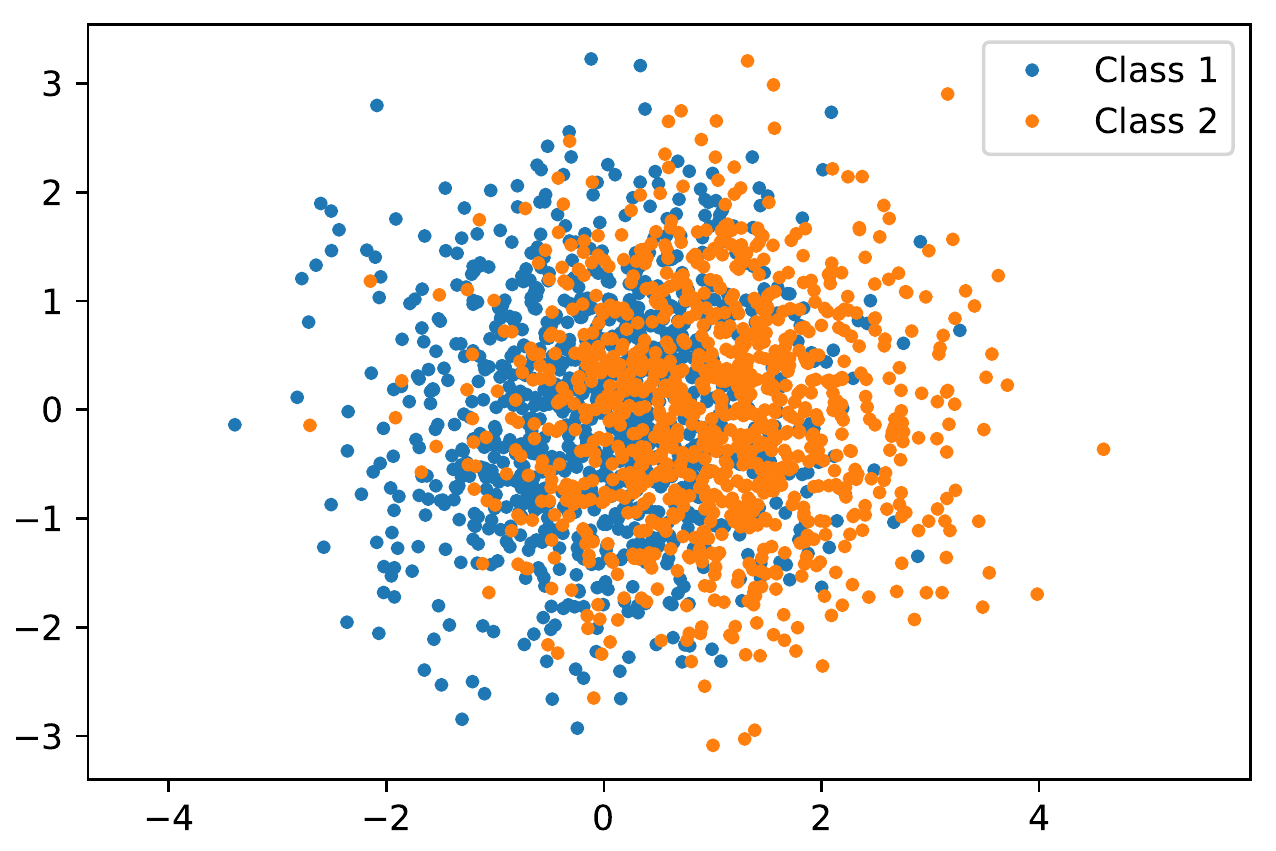}
  \label{fig:sim-1}}
\subfigure[$m=3$]{
\includegraphics[width=0.45\columnwidth]{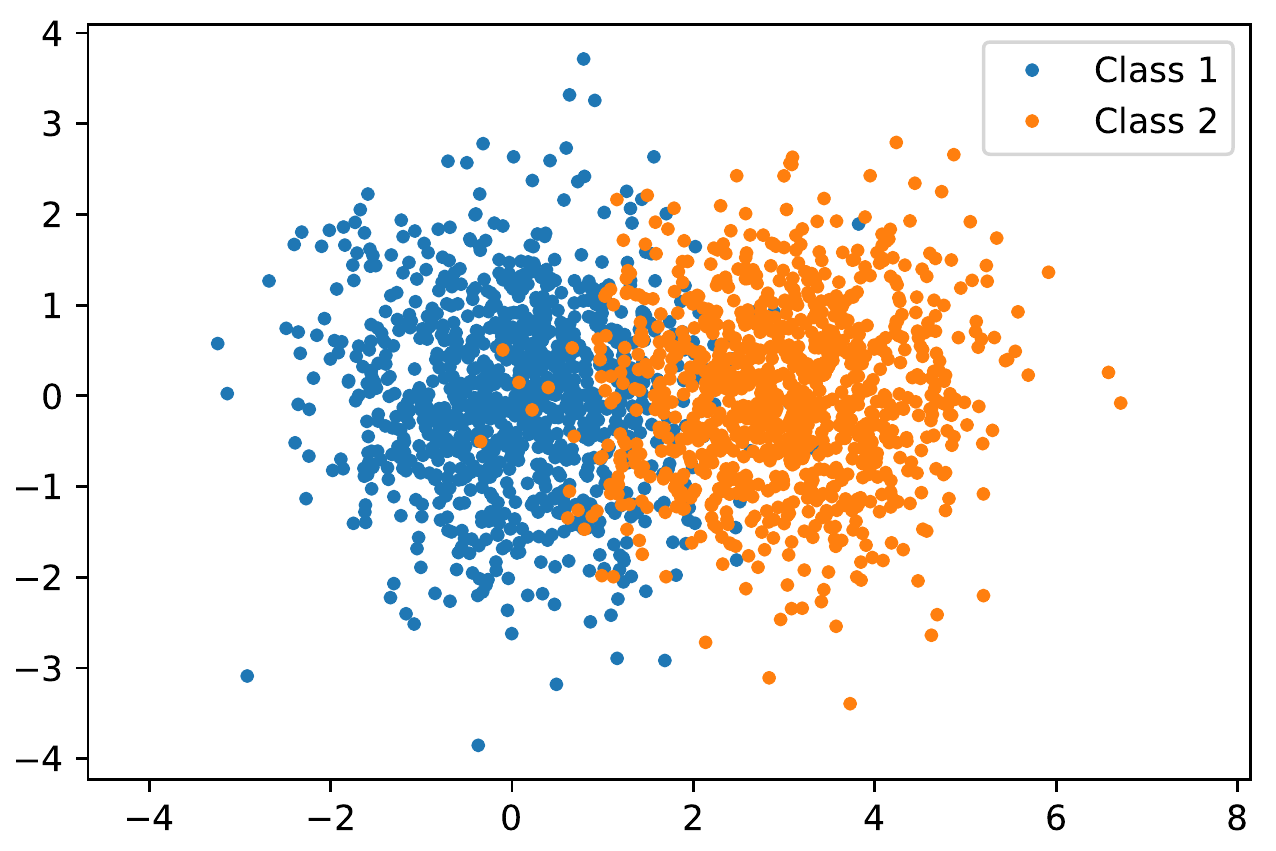}
  \label{fig:sim-3}}
\end{center}
\caption{Examples of simulated datasets of 2 classes. Samples from each class are draw from a 2-dimensional multivariate normal distributions with unit covariance matrix. $m$ denotes the distance between the means of two classes.}
\label{fig:sim-gaussian}
\end{figure}

\begin{figure}[t]
\begin{center}
\includegraphics[width=0.9\columnwidth]{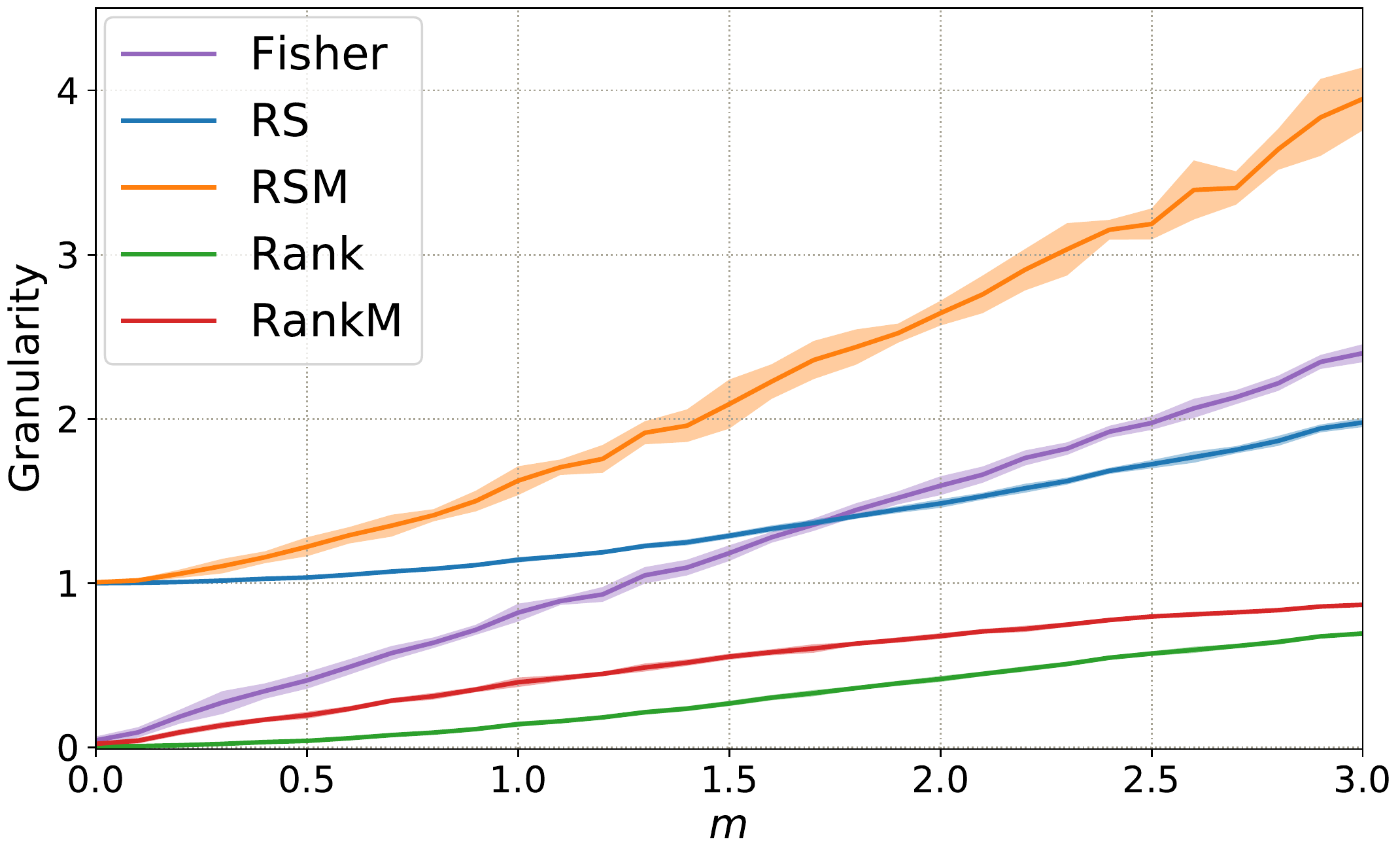}
\end{center}
\caption{Dataset granularity measures Fisher, RS, RSM, Rank and RankM on simulated dataset with variant $m$. The solid line denotes mean and the shaded region represents standard deviation.}
\label{fig:score-gaussian}
\end{figure}

\begin{figure}[t]
\begin{center}
\includegraphics[width=0.75\columnwidth]{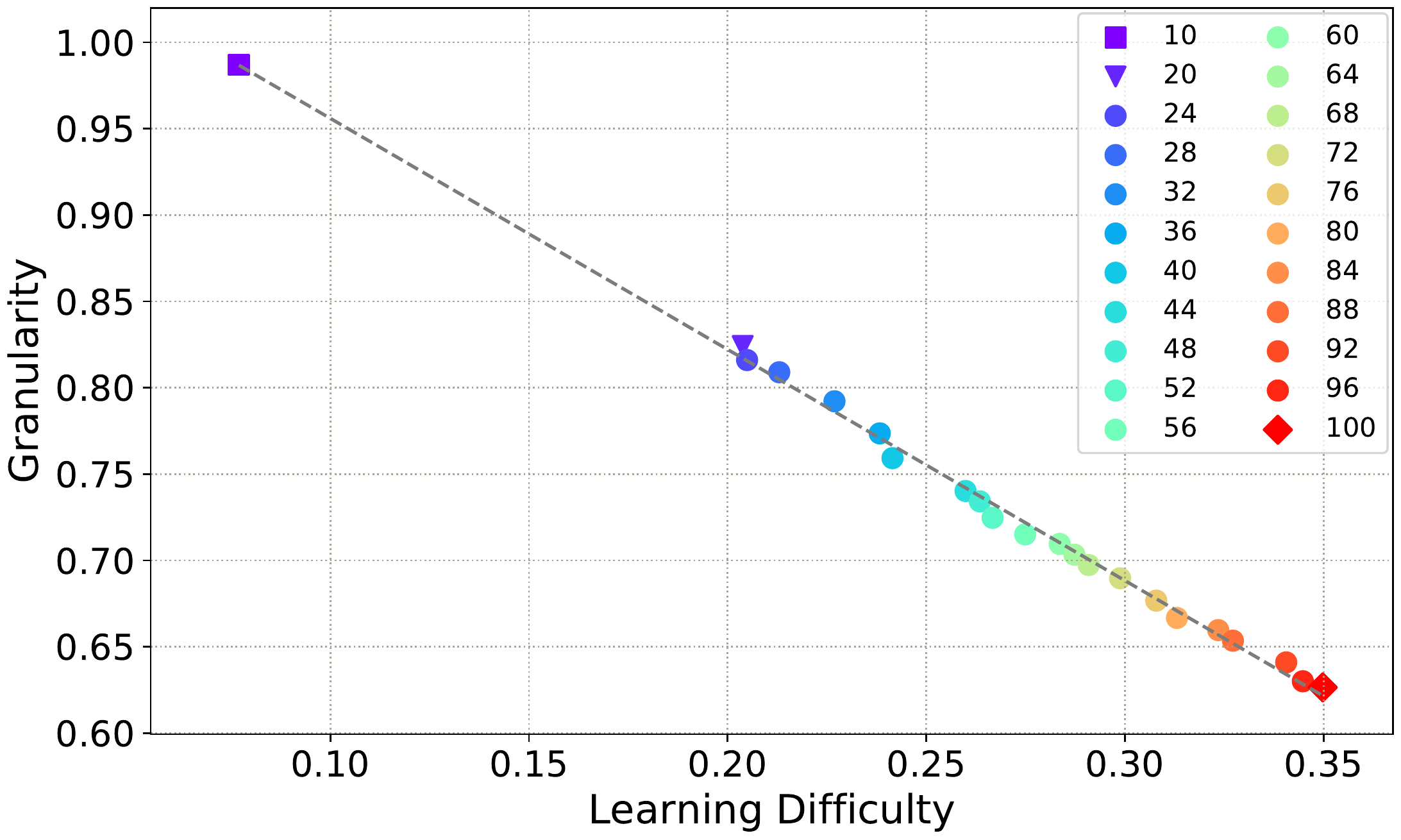}
\end{center}
\caption{Dataset granularity correlates well with difficulty (training error rate using linear logistic regression). The purple square marker represents CIFAR-10 dataset. Other markers represent the CIFAR-100 with different number of classes (by re-labeling coarse-grained classes with fine-grained), where the purple triangle and the red diamond denote CIFAR-100 with 20 and 100 coarse-grained labels respectively.}
\label{fig:difficulty-cifar}
\end{figure}

\begin{figure}[t]
\begin{center}
\subfigure[CIFAR-10 Pairs ($\rho = 0.992$)]{
\includegraphics[width=0.45\columnwidth]{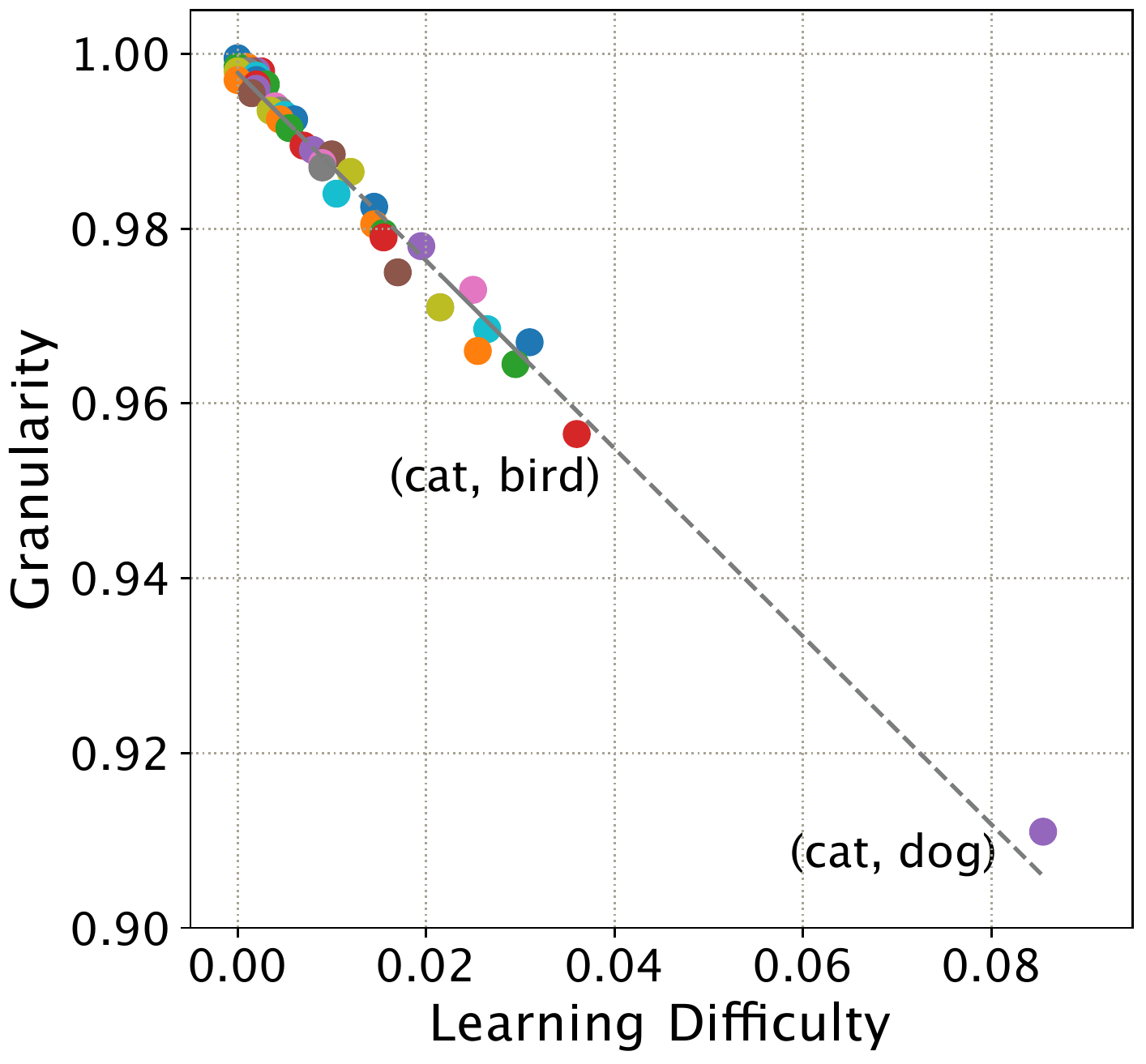}
   \label{fig:difficulty-cifar10-pair}}
\subfigure[CIFAR-100 Pairs ($\rho = 0.958$)]{
\includegraphics[width=0.45\columnwidth]{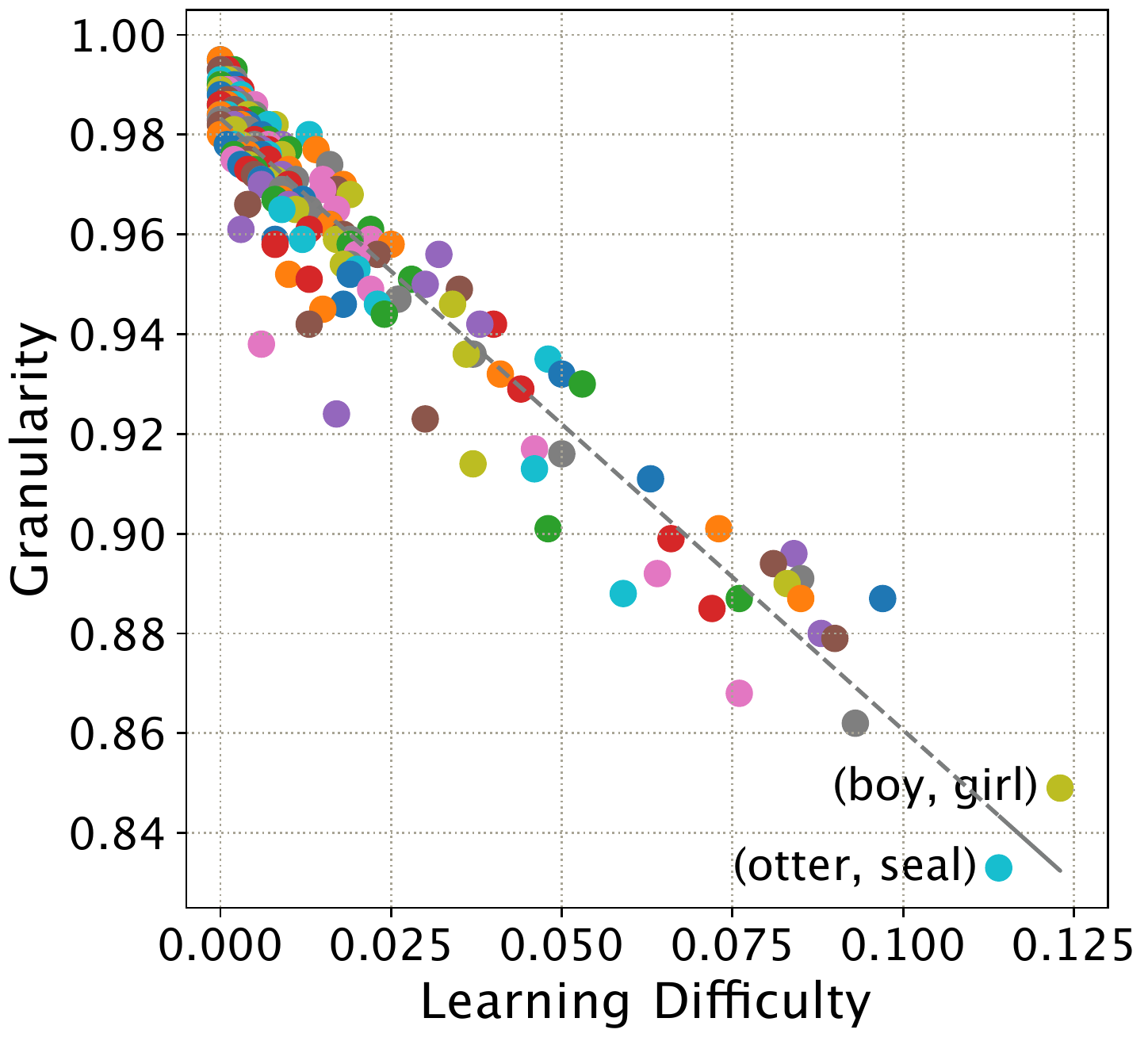}
   \label{fig:difficulty-cifar20-pair}}
\end{center}
\caption{Training difficulty versus granularity on all pairs of classes from CIFAR. The number in the bracket denotes the Pearson's $\rho$ correlation coefficient.}
\label{fig:difficulty-cifar-pair}
\end{figure}

\begin{figure}[t]
\begin{center}
\subfigure[\scriptsize{CIFAR10-Sweet (0.997)}]{
\includegraphics[width=0.35\columnwidth]{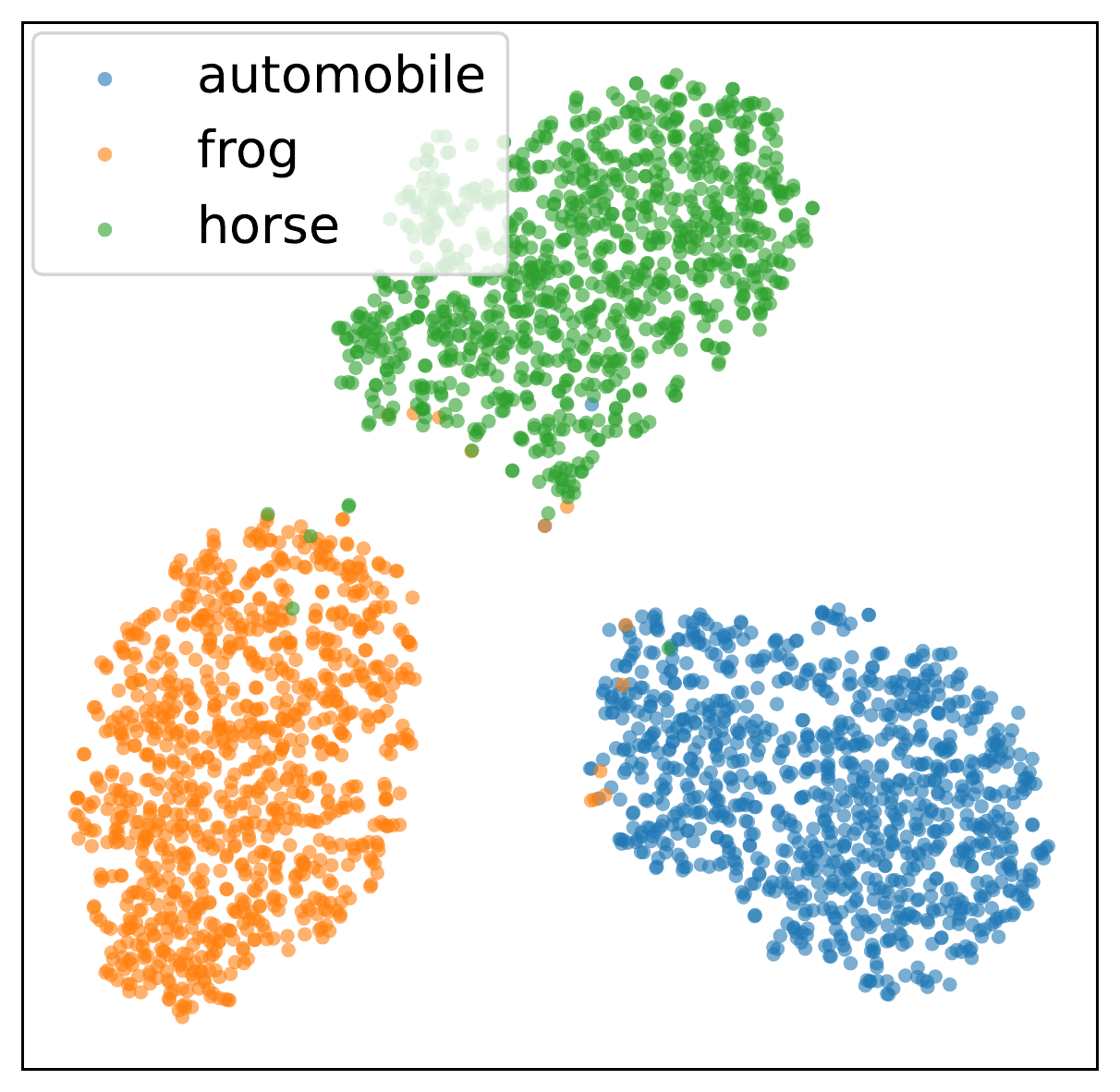}}
\subfigure[\scriptsize{CIFAR10-Bitter (0.927)}]{
\includegraphics[width=0.35\columnwidth]{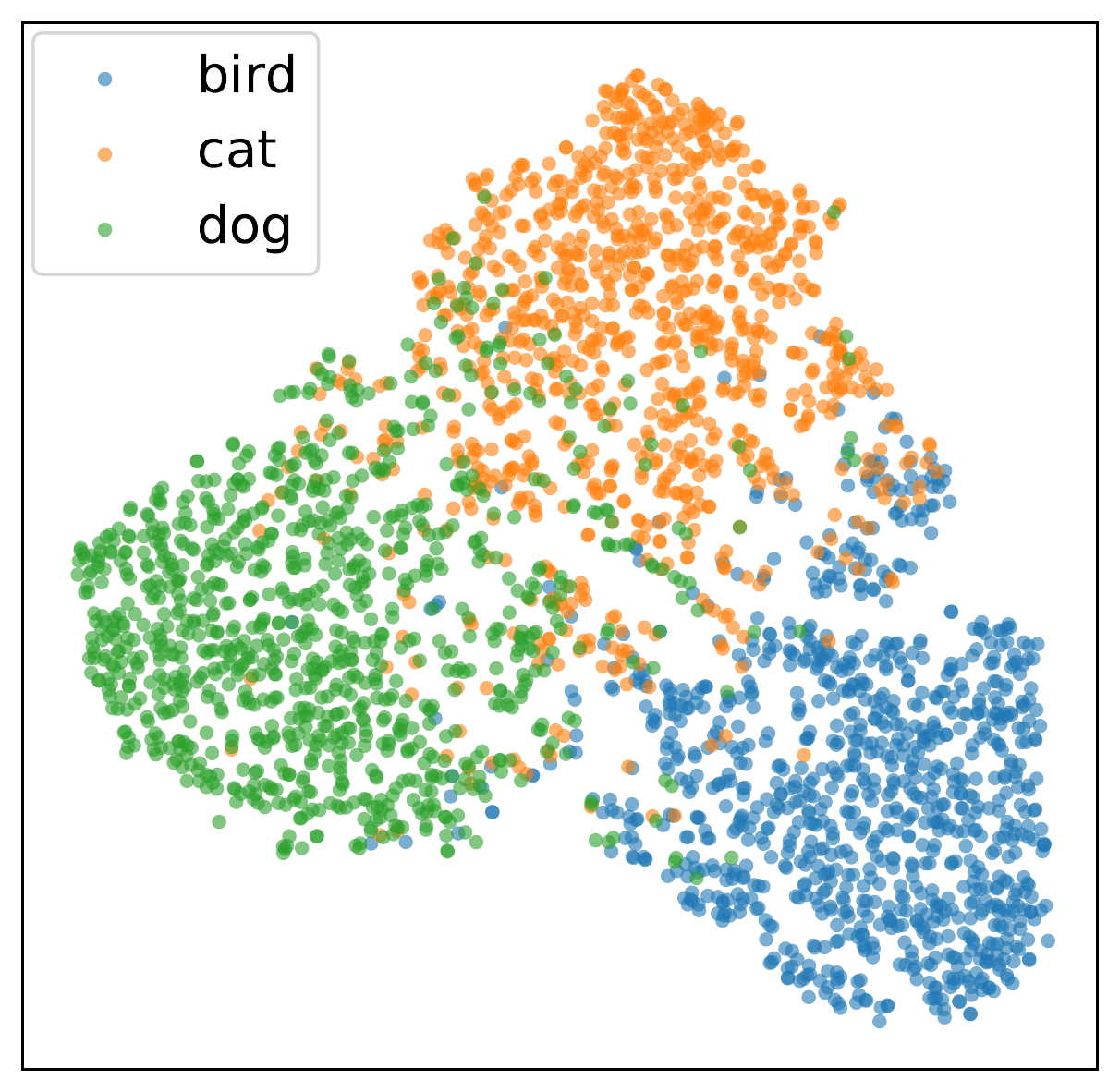}}
\subfigure[\scriptsize{CIFAR100-Sweet (0.979)}]{
\includegraphics[width=0.35\columnwidth]{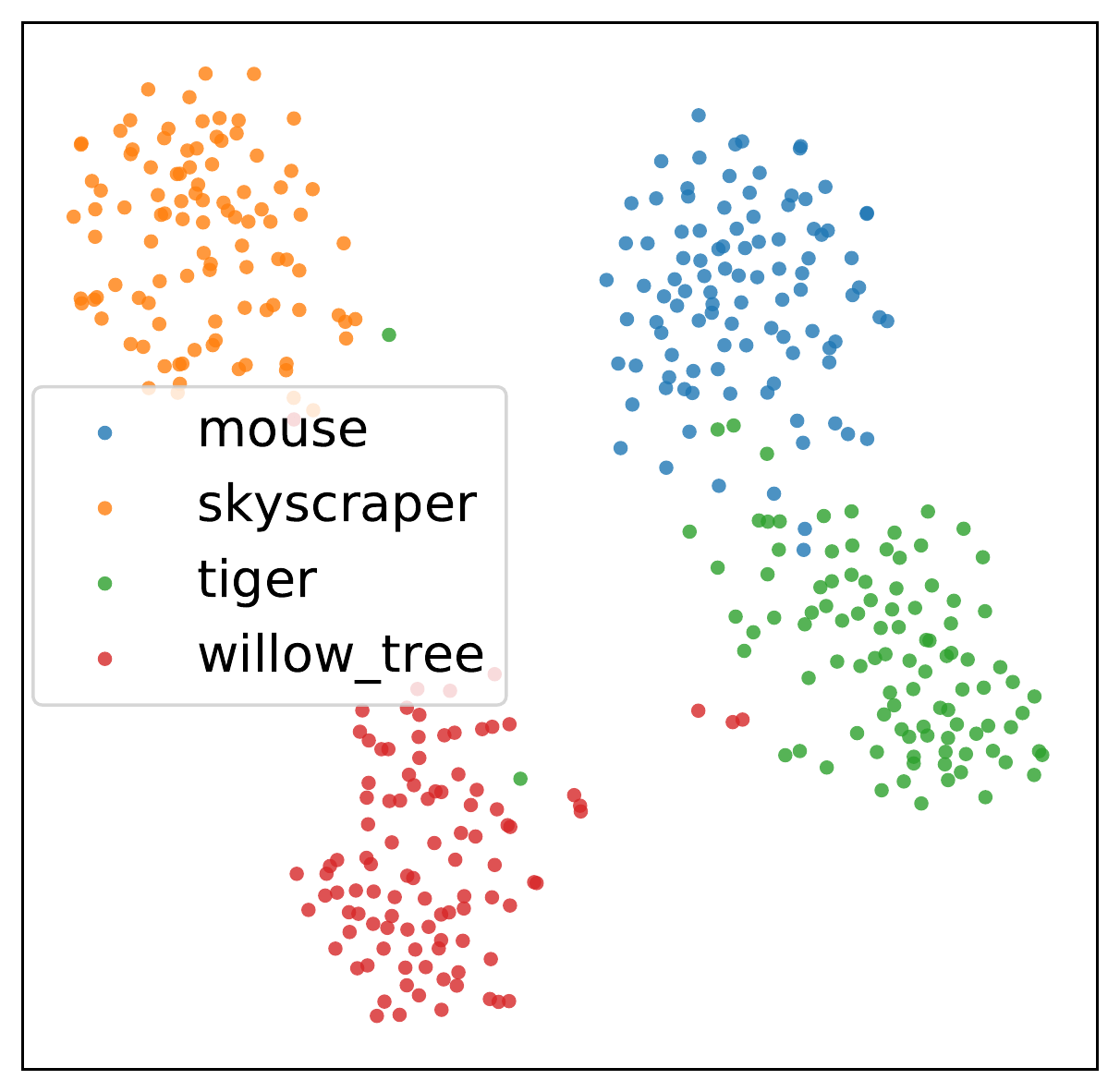}}
\subfigure[\scriptsize{CIFAR100-Bitter (0.785)}]{
\includegraphics[width=0.35\columnwidth]{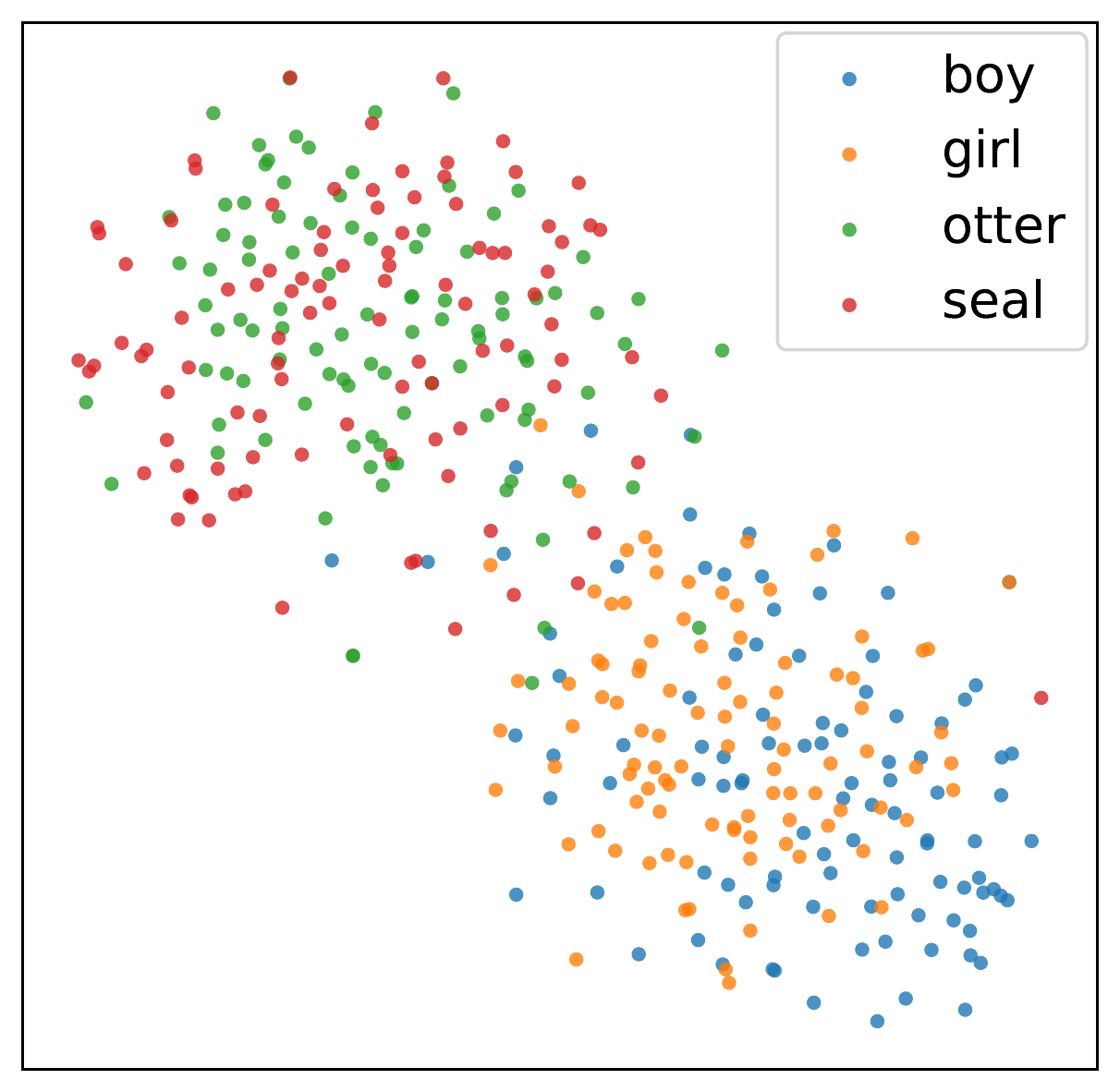}}
\end{center}
\caption{t-SNE embedding of subsets from CIFAR-10 and CIFAR-100. The granularity is shown in the bracket.}
\label{fig:tsne-cifar}
\end{figure}

\section*{Appendix C: Sensitivity to Nuisance Factors}
\label{sec:exp_nuisance}
\begin{figure*}[t]
\begin{center}
\subfigure[Gaussian Noise]{
\includegraphics[width=0.67\columnwidth]{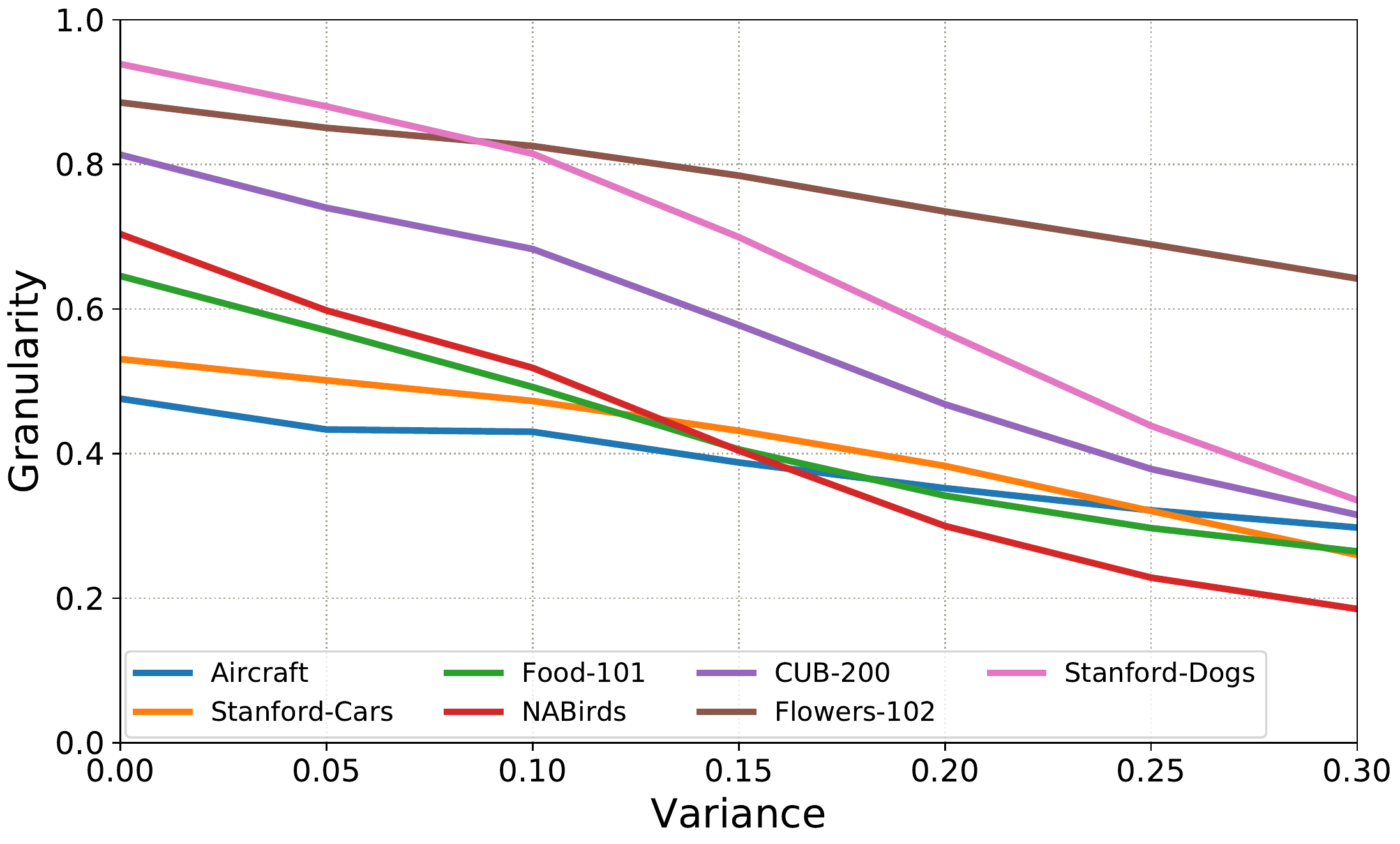}
  \label{fig:nuisance-gaussian}}
\subfigure[Salt \& Pepper Noise]{
\includegraphics[width=0.67\columnwidth]{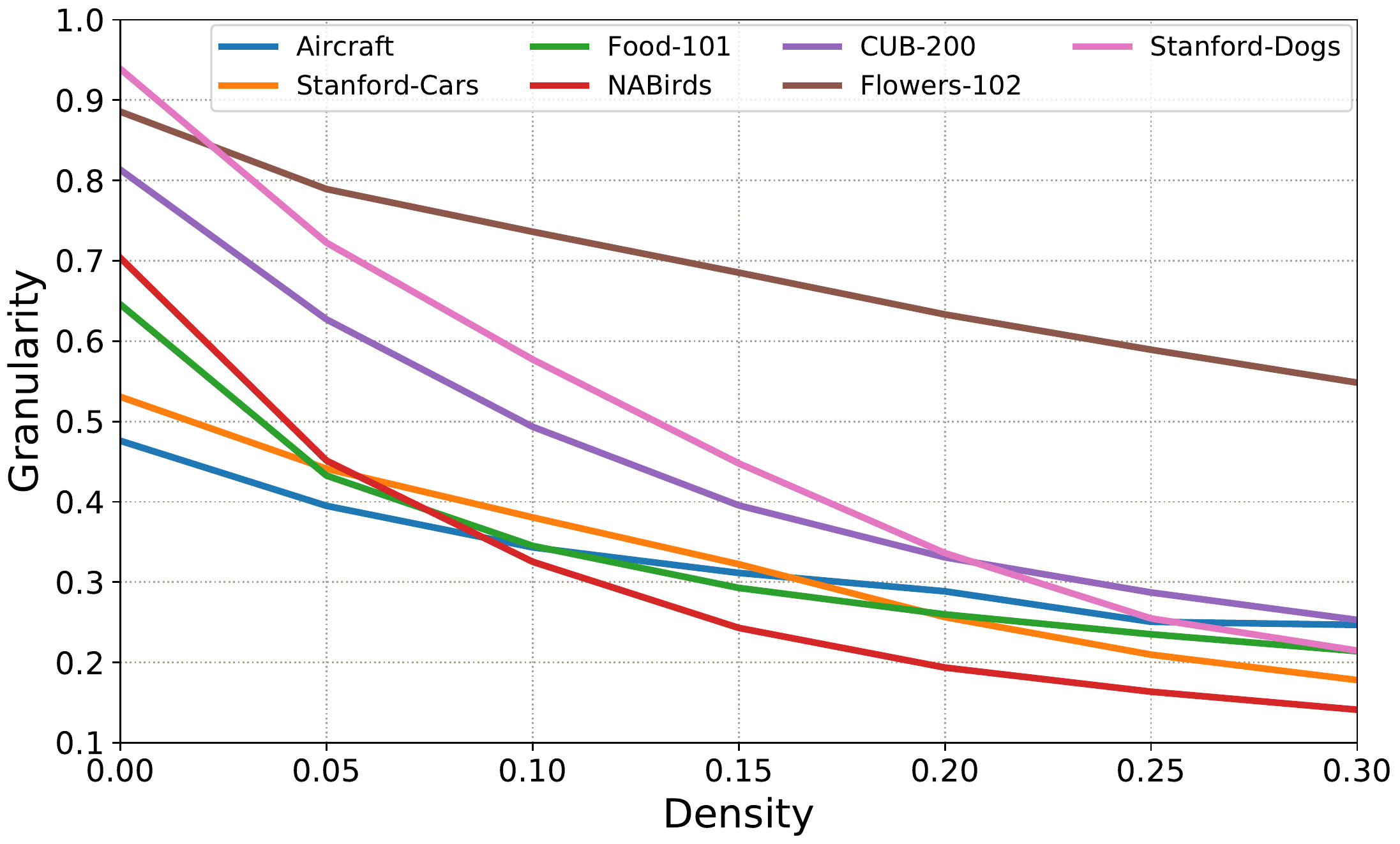}
  \label{fig:nuisance-sp}}
\subfigure[Reduced Resolution]{
\includegraphics[width=0.67\columnwidth]{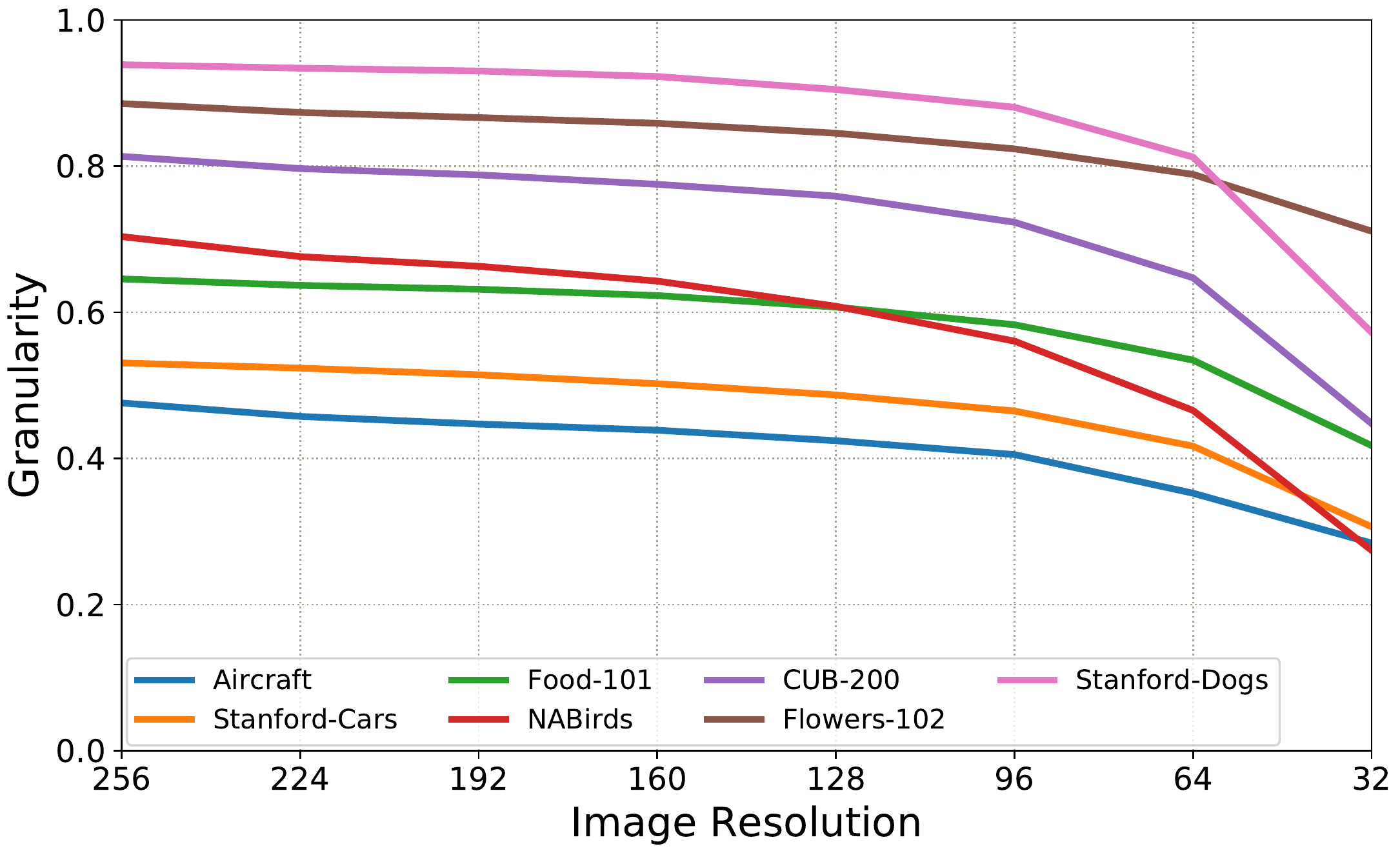}
  \label{fig:nuisance-res}}
\end{center}
\caption{Dataset granularity on different datasets with respect to nuisance factors including Gaussian noise, Salt \& Pepper noise and reduced resolution. The datasets become more fine-grained when we increase the level of noise and reduce the image resolution.}
\label{fig:nuisance}
\end{figure*}

Ideally, to distinguish classes with subtle visual difference, one could zoom in and leverage information from local discriminative regions.
However, nuisance factors arise during image acquisition in real-world (\eg, noise, motion blur, occlusion \etc).
Furthermore, due to physical constraints of imaging devices, one can not get sharp images with arbitrary high spatial resolution.
To examine how nuisance factors affects dataset granularity, we add Gaussian noise and Salt \& Pepper noise to images before feature extraction.
In addition, we reduce the image resolution by re-sizing the shorter edge of the image and keeping the aspect ratio.
Fig.~\ref{fig:nuisance} shows dataset granularity on 7 datasets using ImageNet pre-trained ResNet-50 features.
Dataset granularity is sensitive to noise and reduced resolution.
In addition, nuisance factors may change the relative order of granularity. For example, NABirds becomes more fine-grained than Aircraft, Stanford Cars and Food-101 as we increasing the noise and reducing the image resolution.

\section*{Appendix D: Proof}
\label{sec:proof}

\begin{proposition}[]
\label{prop:2}
Dataset granularity measures Fisher, RS, RSM, Rank and RankM satisfy granularity consistency, isomorphism invariance and scale invariance.
\end{proposition}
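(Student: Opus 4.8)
The plan is to verify the three axioms for each of the five measures, arranging the argument so that a few common observations do most of the work. Fix a labeled dataset $\mathcal{S}=(\mathcal{X},\mathcal{Y})$ with induced partition $C=\{C_1,\dots,C_k\}$. First note that every one of the five formulas depends on $\mathcal{Y}$ only through $C$ viewed as an \emph{unordered} collection of blocks: the ingredients $c_i$, $c_{x_i}$, $a(x_i)$, $b(x_i)$, $R_i$, $R_{ij}$, $R_{ic}$ are all built from distances restricted to, or aggregated over, the classes, never from the numerical label values. An isomorphic transformation permutes the block indices but leaves this collection fixed, so Definition~\ref{def:isomorphism} holds simultaneously for all five measures. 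Scale invariance (Definition~\ref{def:scale}) is equally quick: for Fisher, RS and RSM, replacing $d$ by $\alpha d$ with $\alpha>0$ multiplies numerator and denominator by $\alpha$ and preserves the orderings of within-class distance sums (hence the medoids) and of sample-to-class distances (hence the ``nearest other class''); for Rank and RankM the statistics $R_i,R_{ij},R_{ic}$ depend on $d$ only through the order it induces on the relevant distances, which $\alpha$-scaling leaves untouched.

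The substance is granularity consistency (Definition~\ref{def:consistency}). Let $\tilde d$ be a granularity consistent transformation of $d$, so $\tilde d\le d$ on intra-class pairs and $\tilde d\ge d$ on inter-class pairs, and dispatch the two non-medoid measures first. For \emph{RS}: for each $x_i$, $a(x_i)$ is an average of intra-class distances from $x_i$, hence weakly decreases, while $b(x_i)=\min_{C'\ne C}\tfrac{1}{|C'|}\sum_{x\in C'}d(x_i,x)$ is a minimum of averages of inter-class distances from $x_i$, each of which weakly increases; since a minimum of pointwise-larger functions is larger, $b(x_i)$ weakly increases, so every ratio $b(x_i)/a(x_i)$ weakly increases and so does RS. For \emph{Rank}: fix $x_i$ and sort the other $n-1$ samples by distance to $x_i$; the transformation can only shrink distances to same-class samples and grow distances to different-class samples, so no different-class sample can overtake a same-class sample it previously trailed; because the average precision $\mathrm{AP}_i$ (treating ``same class as $x_i$'' as the relevant set) is invariant to reordering within the relevant set and within the irrelevant set, and weakly improves when a relevant item moves ahead of an irrelevant one, $\mathrm{AP}_i$ weakly increases, hence $\sum_i(1-\mathrm{AP}_i)$ weakly decreases and Rank weakly increases. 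Ties are handled by fixing any consistent tie-break, which does not affect the monotonicity.

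The main obstacle is granularity consistency for the medoid-based measures \emph{Fisher}, \emph{RSM} and \emph{RankM}, since a class medoid is itself a function of the distance and the within-class distance sums shrink non-uniformly under $\tilde d$. The plan is to isolate this in a lemma stating that a granularity consistent transformation cannot enlarge the intra-class quantities (the denominators $\tfrac1n\sum_i d(x_i,c_{x_i})$ of Fisher and RSM, and the own-medoid distance inside $R_{ic}$) nor move the inter-class quantities in the wrong direction (the between-medoid sum $\sum_{i\ne j}d(c_i,c_j)$ of Fisher, the distance $d(x_i,c_{x_i}')$ to the nearest foreign medoid in RSM, and the rank $R_{ic}$ in RankM). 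For the denominators one uses that the $\tilde d$-medoid minimizes the $\tilde d$-within-class sum, which is bounded by the old within-class sum at the old medoid; for the inter-class side one combines $\tilde d\ge d$ across classes with estimates such as $\min_{C'\ne C}\tilde d(x_i,\tilde c_{C'})\ge\min_{C'\ne C}d(x_i,c_{C'})$ and the analogues for $\sum_{i\ne j}d(c_i,c_j)$ and for $R_{ic}$. The care --- and the step I expect to be hardest --- is handling the case where the medoid actually moves under $\tilde d$: one must show such a move cannot simultaneously shrink an inter-class quantity and enlarge an intra-class quantity. Granting the lemma, the bookkeeping for Fisher, RSM and RankM reduces to RS/Rank-style computations as above, and combining isomorphism invariance, scale invariance and granularity consistency yields the proposition.
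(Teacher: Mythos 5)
Your handling of isomorphism invariance, scale invariance, and granularity consistency for RS and Rank is complete and is essentially the argument the paper gives. The gap is exactly where you flagged it: for Fisher, RSM and RankM you defer everything to a lemma about what happens when the medoids are recomputed under the transformed distance, and that lemma is never proved --- moreover, in the strong form you state it (``a medoid move cannot simultaneously shrink an inter-class quantity and enlarge an intra-class quantity''), it is false, so the plan cannot be completed as written. Concretely, take $C_1=\{A,B,C\}$, $C_2=\{Z\}$ with $d(A,B)=d(A,C)=1$, $d(B,C)=2$, $d(A,Z)=10$, $d(B,Z)=d(C,Z)=5$ (the paper explicitly allows non-metric distance functions, so the triangle inequality is not an issue). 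The medoid of $C_1$ is $A$, so $\text{Fisher}(\mathcal{S},d)=10/\bigl((1+1)/4\bigr)=20$. Now shrink only the intra-class distance $d(B,C)$ to $0.1$ and leave all other distances unchanged; this is a granularity consistent transformation, but the medoid of $C_1$ moves to $B$ (the tie with $C$ is immaterial by symmetry), the between-medoid distance drops from $10$ to $5$, the mean sample-to-medoid distance becomes $(1+0.1)/4=0.275$, and Fisher drops to roughly $18.2<20$. So under the reading in which $c_i$, $c_{x_i}$, the nearest foreign medoid $c'_{x_i}$, and the medoid ranks $R_{ic}$ are recomputed from $d'$, granularity consistency can genuinely fail, and no movement-insensitivity lemma will rescue it.

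The paper's own proof does not confront this at all: it writes $d'(c_i,c_j)\ge d(c_i,c_j)$, $d'(x_i,c_{x_i})\le d(x_i,c_{x_i})$, $d'(x_i,c'_{x_i})\ge d(x_i,c'_{x_i})$ and $R'_{ic}\le R_{ic}$ with the medoid-dependent objects held fixed, i.e.\ it implicitly evaluates the transformed measure with the medoid assignment computed from the original $d$ and not re-estimated under $d'$. Under that convention the consistency claims for Fisher, RSM and RankM are two-line monotonicity arguments exactly parallel to your RS and Rank arguments (each numerator term is an inter-class distance that can only grow, each denominator term an intra-class distance that can only shrink, and a class medoid cannot be overtaken in rank by any foreign medoid). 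To close your proof you should either adopt and state this frozen-medoid convention explicitly --- which is how the paper's proposition must be read --- or restrict your lemma to what is actually true (e.g.\ the new within-class sum at the new medoid is bounded by the old one, which handles the denominators but not the inter-class numerators); your current strategy of proving full insensitivity to medoid movement is the one step that would fail.
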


\begin{proof}
We add a prime symbol ($'$) to denote the variable after transformation.
For the proof of granularity consistency, we suppose distance function $d'$ is a granularity consistent transformation of distance function $d$; 
for isomorphism invariance, we suppose dataset $\mathcal{S}'$ is a isomorphic transformation of dataset $\mathcal{S}$;
and for scale invariance, we suppoe $d' = \alpha d$.
We denote $x_i, x_j$ in the same class by $x_i \sim_c x_j$ and $x_i, x_j$ in different classes by $x_i \nsim_c x_j$, respectively.

\textit{Fisher is granularity consistent}:
We have $d'(c_i, c_j) \geq d(c_i, c_j)$ and $d'(x_i, c_{x_i}) \leq d(x_i, c_{x_i})$ because $c_i \nsim_c c_j$ and $x_i \sim_c c_{x_i}$.
Therefore, $\frac{\sum_{i \neq j} d'(c_i, c_j)}{\sum_{i=1}^n d'(x_i, c_{x_i})} \geq \frac{\sum_{i \neq j} d(c_i, c_j)}{\sum_{i=1}^n d(x_i, c_{x_i})}$, $\text{Fisher}(\mathcal{S}, d') \geq \text{Fisher}(\mathcal{S}, d)$.

\textit{Fisher is isomorphism invariant}:
For each sample, the medoid of each class remains unchanged under the isomorphism transformation. Therefore, $d(c_i, c_j)$ and $d(x_i, c_{x_i})$ remain unchanged, $\text{Fisher}(\mathcal{S}', d) = \text{Fisher}(\mathcal{S}, d)$.

\textit{Fisher is scale invariant}:
Since $\frac{\sum_{i \neq j} d'(c_i, c_j)}{\sum_{i=1}^n d'(x_i, c_{x_i})} = \frac{\sum_{i \neq j} \alpha d(c_i, c_j)}{\sum_{i=1}^n \alpha d(x_i, c_{x_i})} = \frac{\sum_{i \neq j} d(c_i, c_j)}{\sum_{i=1}^n d(x_i, c_{x_i})}$, we have $\text{Fisher}(\mathcal{S}, d') = \text{Fisher}(\mathcal{S}, d)$.

\textit{RS is granularity consistent}:
Since $d'(x_i, x) \leq d(x_i, x)$ if $x_i \sim_c x$ and $d'(x_i, x) \geq d(x_i, x)$ if $x_i \nsim_c x$, we have $a'(x_i) \leq a(x_i)$ and $b'(x_i) \geq b(x_i)$. Therefore, $\frac{b'(x_i)}{a'(x_i)} \geq \frac{b(x_i)}{a(x_i)}$, $\text{RS}(\mathcal{S}, d') \geq \text{RS}(\mathcal{S}, d)$.

\textit{RS is isomorphism invariant}:
We have $a'(x_i) = a(x_i)$ and $b'(x_i) = b(x_i)$.
Therefore, $\text{RS}(\mathcal{S}', d) = \text{RS}(\mathcal{S}, d)$.

\textit{RS is scale invariant}: 
We have $a'(x_i) = \alpha a(x_i)$ and $b'(x_i) = \alpha b(x_i)$. Therefore, $\frac{a'(x_i)}{b'(x_i)} = \frac{a(x_i)}{b(x_i)}$, $\text{RS}(\mathcal{S}, d') = \text{RS}(\mathcal{S}, d)$.

\textit{RSM is granularity consistent}:
We have $d'(x_i, c_{x_i}) \leq d(x_i, c_{x_i})$ and $d'(x_i, c'_{x_i}) \geq d(x_i, c'_{x_i})$ because $x_i \sim_c c_{x_i}$ and $x_i \nsim_c c'_{x_i}$.
Therefore, $\frac{d'(x_i, c'_{x_i})}{d'(x_i, c_{x_i})} \geq \frac{d(x_i, c'_{x_i})}{d(x_i, c_{x_i})}$, $\text{RSM}(\mathcal{S}, d') \geq \text{RSM}(\mathcal{S}, d)$.

\textit{RSM is isomorphism invariant}:
For each sample, the medoid of each class remains unchanged under the isomorphism transformation. Therefore, $d(x_i, c_{x_i})$ and $d(x_i, c'_{x_i})$ remain unchanged and $\text{RSM}(\mathcal{S}', d) = \text{RSM}(\mathcal{S}, d)$.

\textit{RSM is scale invariant}: 
Since $\frac{d'(x_i, c_{x_i})}{d'(x_i, c'_{x_i})} = \frac{\alpha d(x_i, c_{x_i})}{\alpha d(x_i, c'_{x_i})} = \frac{d(x_i, c_{x_i})}{d(x_i, c'_{x_i})}$, $\text{RSM}(\mathcal{S}, d') = \text{RSM}(\mathcal{S}, d)$.

\textit{Rank is granularity consistent}:
After granularity consistent transformation, no samples from other classes have rank smaller than before.
Therefore, we have $R'_{ij} \leq R_{ij}$.
Since $|R'_i| = |R_i|$, $\sum_{j=1}^{|R'_i|} \frac{j}{R'_{ij}} \geq \sum_{j=1}^{|R_i|} \frac{j}{R_{ij}}$ and $\text{Rank}(\mathcal{S}, d') \geq \text{Rank}(\mathcal{S}, d)$.

\textit{Rank is isomorphism invariant}:
For each sample, the rank list $R_{ij}$ remains unchanged under the isomorphism transformation. Therefore, $R'_{ij} = R_{ij}$ and $\text{Rank}(\mathcal{S}', d) = \text{Rank}(\mathcal{S}, d)$.

\textit{Rank is scale invariant}:
For each sample, the rank list $R_{ij}$ remains unchanged when $d' = \alpha d$, we have $R'_{ij} = R_{ij}$. Therefore, $\text{Rank}(\mathcal{S}, d') = \text{Rank}(\mathcal{S}, d)$.

\textit{RankM is granularity consistent}:
After granularity consistent transformation, we have the rank $R'_{ic} \leq R_{ic}$.
Therefore, $\frac{1}{R'_{ic}} \geq \frac{1}{R_{ic}}$ and $\text{RankM}(\mathcal{S}, d') \geq \text{RankM}(\mathcal{S}, d)$.

\textit{RankM is isomorphism invariant}:
For each sample, the rank $R_{ic}$ remains unchanged under the isomorphism transformation.
Therefore, $R'_{ic} = R_{ic}$ and $\text{RankM}(\mathcal{S}', d) = \text{RankM}(\mathcal{S}, d)$.

\textit{RankM is scale invariant}:
For each sample, the rank of $R_{ic}$ remains unchanged when $d' = \alpha d$, we have $R'_{ic} = R_{ic}$. Therefore, $\text{RankM}(\mathcal{S}, d') = \text{RankM}(\mathcal{S}, d)$.
\end{proof}

\section*{Appendix E: Other Measures}

\textbf{The Baker-Hubert Gamma index (BHG).}
BHG was proposed by Baker and Hubert~\cite{baker1975measuring}, as an adaptation of the Goodman-Kruskal Gamma measure~\cite{goodman1979measures} for rank correlation between two vectors.
BHG is calculated as~\footnote{Note that the original form of BHG is $(N^+ - N^-)/(N^+ + N^-)$, we change it to $N^+/N^-$ to meet the requirement of granularity consistency.}:
\begin{equation}
    \text{BHG}(\mathcal{S}, d) = \frac{1}{n} \sum_{i=1}^n \frac{N^+}{N^-},
\end{equation}
where $N^+$ denotes the number of times a distance between two samples from the same class is smaller than the distance between two samples from different classes and $N^-$ represents the number of times in the opposite case.
Formally, 
\begin{equation}
    N^+ = \sum_{x_i \sim_c x_j, x_i' \nsim_c x_j'}\mathds{1}\big(d(x_i, x_j) < d(x_i', x_j')\big),
\end{equation}
\begin{equation}
    N^- = \sum_{x_i \sim_c x_j, x_i' \nsim_c x_j'}\mathds{1}\big(d(x_i, x_j) > d(x_i', x_j')\big),
\end{equation}
where $\mathds{1}(.)$ is an indicator function.

\begin{proposition}[]
\label{prop:3}
BHG satisfies granularity consistency, isomorphism invariance and scale invariance.
\end{proposition}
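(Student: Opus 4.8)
The plan is to verify the three properties for $\text{BHG}(\mathcal{S}, d) = \frac{1}{n}\sum_{i=1}^n \frac{N^+}{N^-}$ one at a time, exactly in the style of the Appendix D proof for the other five measures, keeping in mind that $N^+$ and $N^-$ do not depend on the outer summation index $i$, so the factor $\frac{1}{n}\sum_{i=1}^n$ just collapses to $\frac{N^+}{N^-}$. First I would set up the notation consistent with the rest of the appendix: write $x_i \sim_c x_j$ for same-class pairs and $x_i \nsim_c x_j$ for different-class pairs, and add primes to denote post-transformation quantities. Throughout, the relevant object is the single comparison $\mathds{1}(d(x_i,x_j) < d(x_i',x_j'))$ summed over all quadruples with $x_i \sim_c x_j$ and $x_i' \nsim_c x_j'$.

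For \emph{granularity consistency}, the key observation is that under a granularity consistent transformation, every same-class distance weakly decreases ($d'(x_i,x_j) \le d(x_i,x_j)$ when $x_i \sim_c x_j$) and every different-class distance weakly increases ($d'(x_i',x_j') \ge d(x_i',x_j')$ when $x_i' \nsim_c x_j'$). Hence for each quadruple, if $d(x_i,x_j) < d(x_i',x_j')$ then also $d'(x_i,x_j) \le d(x_i,x_j) < d(x_i',x_j') \le d'(x_i',x_j')$, so the indicator for $N^+$ can only switch on, never off; symmetrically the indicator for $N^-$ can only switch off. Thus $N'^+ \ge N^+$ and $N'^- \le N^-$, giving $\frac{N'^+}{N'^-} \ge \frac{N^+}{N^-}$ and therefore $\text{BHG}(\mathcal{S},d') \ge \text{BHG}(\mathcal{S},d)$. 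For \emph{isomorphism invariance}, permuting class indices changes neither which pairs are same-class nor which are different-class, and does not touch $d$ at all, so $N^+$ and $N^-$ are literally unchanged, hence $\text{BHG}(\mathcal{S}',d) = \text{BHG}(\mathcal{S},d)$. For \emph{scale invariance}, with $d' = \alpha d$ and $\alpha > 0$ the strict inequality $d(x_i,x_j) < d(x_i',x_j')$ holds if and only if $\alpha d(x_i,x_j) < \alpha d(x_i',x_j')$, so $N'^+ = N^+$ and $N'^- = N^-$, and $\text{BHG}(\mathcal{S},d') = \text{BHG}(\mathcal{S},d)$.

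The only genuine subtlety — and the step I expect to need a word of care — is the boundary behavior of the strict inequalities under a non-strict (weakly monotone) granularity consistent transformation. The argument above already handles it: I never need the transformed inequality to be strict; I only need $d'(x_i,x_j) \le d(x_i,x_j) < d(x_i',x_j') \le d'(x_i',x_j')$, whose endpoints compose to a strict inequality because the middle link is strict. A companion remark worth including is that $N^-$ could in principle be zero (e.g.\ if the dataset is perfectly clustered so that every within-class distance beats every between-class distance), in which case $\text{BHG}$ is $+\infty$; this is consistent with treating ``$\ge$'' in the extended reals, and the monotonicity $N'^- \le N^-$ together with $N'^+ \ge N^+$ is still the right direction. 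I would therefore state the proof compactly as three short paragraphs mirroring the Fisher/RS/RSM/Rank/RankM proofs, noting once that all inequalities are to be read in $[0,+\infty]$.
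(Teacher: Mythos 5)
Your proof is correct and follows essentially the same route as the paper's: unchanged $N^+$, $N^-$ under scaling and relabeling, and the monotonicity $N'^+ \geq N^+$, $N'^- \leq N^-$ under a granularity consistent transformation. Your extra care about the strict-inequality chain and the $N^- = 0$ edge case is a sensible refinement but does not change the argument.
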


\begin{proof}
$N^+$ and $N^-$ remain unchanged when $d' = \alpha d$ and under isomorphism transformation, so BHG satisfies scale invariance and isomorphism invariance.
For granularity consistency, since $x_i \sim_c x_j$ and $x_i' \nsim_c x_j'$, we have $d'(x_i, x_j) \leq d(x_i, x_j)$ and $d'(x_i', x_j') \geq d(x_i', x_j')$.
Therefore, $N'^+ \geq N^+$ and $N'^- \leq N^-$, $\text{BHG}(\mathcal{S}, d') \geq \text{BHG}(\mathcal{S}, d)$.
\end{proof}

\textbf{The C index (C).}
The C index was proposed by Hubert and Levin~\cite{hubert1976general} as a clustering quality measure.
Let $D = \sum_{x_i \sim_c x_j} d(x_i, x_j)$ denotes the sum of distances between all pairs of samples from the same class, and there are $N = \sum_{i=1}^k n_i(n_i-1)/2$ such pairs, where $n_i$ is the number of samples in class $i$.

The C index is defined as:
\begin{equation}
    \text{C}(\mathcal{S}, d) = \frac{D_{\max} - D_{\min}}{D - D_{\min}},
\end{equation}
where $D_{\max}$ is the sum of the $N$ largest distances between all pairs of samples in the dataset and $D_{\min}$ is the sum of the $N$ smallest distances between all pairs.

\begin{proposition}[]
\label{prop:4}
The C index satisfies granularity consistency, isomorphism invariance and scale invariance.
\end{proposition}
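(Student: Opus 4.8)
The plan is to dispatch scale invariance and isomorphism invariance immediately and to concentrate all the effort on granularity consistency.

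\emph{Scale and isomorphism invariance.} The pair count $N=\sum_i n_i(n_i-1)/2$ depends only on the class sizes, not on $d$. Multiplying every distance by $\alpha>0$ preserves the ordering of pairwise distances, so it multiplies each of $D$, $D_{\max}$ and $D_{\min}$ by $\alpha$ and leaves the ratio defining $\text{C}$ unchanged. Permuting the class indices leaves the set partition $\{C_1,\dots,C_k\}$ of $\mathcal{X}$ unchanged, hence leaves the family of within-class pairs — and therefore $N$, $D$, $D_{\max}$ and $D_{\min}$ — unchanged, giving $\text{C}(\mathcal{S}',d)=\text{C}(\mathcal{S},d)$.

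\emph{Granularity consistency: the reformulation.} Let $W$ be the set of within-class pairs and $B$ the set of between-class pairs, so $|W|=N$, and for a set of pairs $T$ abbreviate $\sum_T d:=\sum_{(i,j)\in T}d(x_i,x_j)$. Every $N$-element set of pairs is uniquely of the form $T=(W\setminus A)\cup B'$ with $A\subseteq W$, $B'\subseteq B$, $|A|=|B'|$, and for such $T$ one has $\sum_T d = D-\sum_A d+\sum_{B'} d$. Optimizing over $T$ therefore yields
\begin{align}
D - D_{\min} &= \max_{A,\,B'}\ \bigl(\textstyle\sum_A d-\sum_{B'} d\bigr),\\
D_{\max} - D &= \max_{A,\,B'}\ \bigl(\textstyle\sum_{B'} d-\sum_A d\bigr),
\end{align}
where both maxima range over $A\subseteq W$, $B'\subseteq B$ with $|A|=|B'|$, and both are nonnegative (take $A=B'=\emptyset$). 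The crucial observation is that this feasible family of index sets does not depend on the distance function, whereas a granularity consistent transformation $d'$ of $d$ satisfies $\sum_A d'\le\sum_A d$ for all $A\subseteq W$ and $\sum_{B'} d'\ge\sum_{B'} d$ for all $B'\subseteq B$. Comparing the $d$- and $d'$-versions of these identities at the relevant maximizers then gives $D'-D'_{\min}\le D-D_{\min}$ and $D'_{\max}-D'\ge D_{\max}-D$, and of course $N'=N$.

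\emph{Conclusion and the main obstacle.} Writing $\text{C}(\mathcal{S},d)=1+\dfrac{D_{\max}-D}{D-D_{\min}}$ (in the nondegenerate case $D>D_{\min}$; when $D=D_{\min}$ the data are perfectly clustered and $\text{C}=+\infty$, a value one checks is also preserved), the numerator is nonnegative and weakly increases under the transformation while the denominator is positive and weakly decreases, so $\text{C}(\mathcal{S},d')\ge\text{C}(\mathcal{S},d)$. The step I expect to be the crux is exactly the reformulation above: because $D_{\max}$ and $D_{\min}$ are sums over the $N$ largest and $N$ smallest pairwise distances, the identities of the participating pairs can shift when $d$ changes to $d'$, so a naive pairwise comparison fails. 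Recasting $D-D_{\min}$ and $D_{\max}-D$ as maxima of differences of sums over a fixed, distance-independent family of index sets removes this difficulty, after which monotonicity under $d\mapsto d'$ is immediate.
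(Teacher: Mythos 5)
Your proof is correct, and for the granularity-consistency part it takes a genuinely different route from the paper. The paper decomposes the transformation into single-pair perturbations and then runs an explicit case analysis (pair in $S_{\min}$ only, in $S_{\max}$ only, in both, in neither, each split into within-class and between-class), checking in every case that the ratio does not decrease; this implicitly treats the extremal sets $S_{\max}$ and $S_{\min}$ as unchanged by each perturbation and tacitly assumes the denominators stay positive. You instead observe that every $N$-element set of pairs is $(W\setminus A)\cup B'$ with $|A|=|B'|$, so that $D-D_{\min}=\max_{A,B'}\bigl(\sum_A d-\sum_{B'}d\bigr)$ and $D_{\max}-D=\max_{A,B'}\bigl(\sum_{B'}d-\sum_A d\bigr)$ over a \emph{distance-independent} feasible family; pointwise monotonicity of the objectives under $d\mapsto d'$ then gives $D'-D'_{\min}\le D-D_{\min}$ and $D'_{\max}-D'\ge D_{\max}-D$, and writing $\text{C}=1+\frac{D_{\max}-D}{D-D_{\min}}$ finishes the argument. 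What your approach buys is exactly what you identify as the crux: it never needs to track how the identities of the $N$ largest or $N$ smallest pairs shift when distances change, whereas the paper's case-by-case argument must be read as applying only to perturbations small enough not to reorder the pairs (with ties handled implicitly); you also treat the degenerate case $D=D_{\min}$ explicitly, which the paper does not. The scale- and isomorphism-invariance arguments coincide with the paper's. The only cosmetic caveat is that the conclusion should be read in the extended reals, since the denominator $D'-D'_{\min}$ may vanish after the transformation even when $D-D_{\min}>0$, in which case $\text{C}(\mathcal{S},d')=+\infty$ and the inequality still holds.
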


\begin{proof}
When $d' = \alpha d$, $D = \alpha D$, $D_{\min} = \alpha D_{\min}$ and $D_{\max} = \alpha D_{\max}$, so $\text{C}(\mathcal{S}, d') = \frac{\alpha D_{\max} - \alpha D_{\min}}{\alpha D - \alpha D_{\min}} = \frac{D_{\max} - D_{\min}}{D - D_{\min}} = \text{C}(\mathcal{S}, d)$.
Therefore, the C index is scale invariant.
The C index is also isomorphism invariant since it does not depend on class indexes.

Suppose the distance between $p$ pairs changed after the granularity consistent transformation.
This transformation can be viewed as a series of $p$ granularity consistent transformations, each changed only 1 pair.
We consider such transformation for a single pair $x_i, x_j$ in this proof.
Denote the set of pairs included in the calculation of $D_{\max}$ as $S_{\max}$ and the set of pairs included in the calculation of $D_{\min}$ as $S_{\min}$, there are 4 cases in terms of which subset the pair belongs to:

\underline{Case 1}: $x_i, x_j \in S_{\min}$ and $x_i, x_j \notin S_{\max}$.

1) If $x_i \sim_c x_j$, $d'(x_i,x_j) = d(x_i,x_j) - \epsilon$, where $\epsilon > 0$ denotes the difference between the pairwise distance before and after the granularity consistent transformation, we have:
\begin{align*}
    \text{C}(\mathcal{S}, d') &= \frac{D_{\max} - (D_{\min} - \epsilon)}{(D - \epsilon) - (D_{\min} - \epsilon)} \\
    &= \frac{D_{\max} - D_{\min} + \epsilon}{D - D_{\min}} > \text{C}(\mathcal{S}, d).
\end{align*}

2) If $x_i \nsim_c x_j$, $d'(x_i,x_j) = d(x_i,x_j) + \epsilon$. We have:
\begin{align*}
    \text{C}(\mathcal{S}, d') &= \frac{D_{\max} - (D_{\min} + \epsilon)}{D - (D_{\min} + \epsilon)} \\
    &= \frac{D_{\max} - D_{\min} - \epsilon}{D - D_{\min} - \epsilon} > \text{C}(\mathcal{S}, d).
\end{align*}
This is because $\frac{D_{\max} - D_{\min}}{D - D_{\min}} > 1$.

\underline{Case 2}: $x_i, x_j \in S_{\max}$ and $x_i, x_j \notin S_{\min}$. 

1) If $x_i \sim_c x_j$:
\begin{align*}
    \text{C}(\mathcal{S}, d') &= \frac{(D_{\max} - \epsilon) - D_{\min}}{(D - \epsilon) - D_{\min}} > \text{C}(\mathcal{S}, d).
\end{align*}

2) If $x_i \nsim_c x_j$:
\begin{align*}
    \text{C}(\mathcal{S}, d') &= \frac{(D_{\max} + \epsilon) - D_{\min}}{D - D_{\min}} > \text{C}(\mathcal{S}, d).
\end{align*}

\underline{Case 3}: $x_i, x_j \in S_{\max} \cap S_{\min}$. 

1) If $x_i \sim_c x_j$:
\begin{align*}
    \text{C}(\mathcal{S}, d') &= \frac{(D_{\max} - \epsilon) - (D_{\min} - \epsilon)}{(D - \epsilon) - (D_{\min} - \epsilon)} = \text{C}(\mathcal{S}, d).
\end{align*}

2) If $x_i \nsim_c x_j$:
\begin{align*}
    \text{C}(\mathcal{S}, d') &= \frac{(D_{\max} + \epsilon) - (D_{\min} + \epsilon)}{D - (D_{\min} + \epsilon)}\\ &= \frac{D_{\max} - D_{\min}}{D - D_{\min} - \epsilon} > \text{C}(\mathcal{S}, d).
\end{align*}

\underline{Case 4}: $x_i, x_j \notin S_{\max}$ and $x_i, x_j \notin S_{\min}$.

1) If $x_i \sim_c x_j$:
\begin{align*}
    \text{C}(\mathcal{S}, d') &= \frac{D_{\max} - D_{\min}}{(D - \epsilon) - D_{\min}} > \text{C}(\mathcal{S}, d).
\end{align*}

2) If $x_i \nsim_c x_j$:
\begin{align*}
    \text{C}(\mathcal{S}, d') &= \frac{D_{\max} - D_{\min}}{D - D_{\min}} = \text{C}(\mathcal{S}, d).
\end{align*}

Considering all 4 cases, we have $\text{C}(\mathcal{S}, d') \geq \text{C}(\mathcal{S}, d)$.
The C index is granularity consistent.
\end{proof}

{\small
\bibliographystyle{ieee_fullname}
\bibliography{ref}
}

\end{document}